\newtheorem{theorem}{Theorem}
\newtheorem{lemma}[theorem]{Lemma}
\theoremstyle{definition}
\newtheorem{definition}{Definition}[section]
\DeclareMathOperator*{\argmin}{arg\,min}
\newcommand{\pr}[1]{\text{Pr}\big[ #1 \big] }
\newcommand{\prob}[2]{\text{Pr}_{#1}\big[ #2 \big] }
\newcommand{\expect}{\mathbb{E} }
\newcommand{\pdf}{p}
\newcommand{\eps}{\epsilon}
\newcommand{\approxoracle}{\mathcal{O}_\alpha}
\newcommand{\approxoracleF}[1]{\approxoracle \Big( #1 \Big)}
\newcommand{\oracle}{\mathcal{O}}
\newcommand{\dataset}{\mathcal{D}}
\newcommand{\DS}{\dataset}
\newcommand{\inner}[2]{\big\langle #1, #2 \big\rangle}
\newcommand{\ObjDisc }{\textsf{OPDisc}}
\newcommand{\ObjSamp }{\textsf{OPSamp}}
\newcommand{\lossSet}{\mathcal{L}}
\newcommand{\Lone}[1]{||#1||_{1}}
\newcommand{\loss}{L}
\newcommand{\lossF}[1]{\loss \big(#1 \big)}
\newcommand{\lossregnormed}{\bar{\loss}}
\newcommand{\wopt}{\hat{w}}
\newcommand{\wcon}{v}
\newcommand{\bprojI}{\pi}
\newcommand{\bproj}[1]{\bprojI\big(#1\big)} 
\newcommand{\infdia}{D_{\infty}} 
\newcommand{\noiseset}[1]{\mathcal{E}\big(#1\big)}
\newcommand{\noisemap}[1]{g_{#1}}
\newcommand{\discspace}{\mathcal{W}_\tau}
\newcommand{\algo}{\mathcal{M}}
\newcommand{\cW}{\mathcal{W}}
\newcommand{\Lap}[1]{\mathrm{Lap}(#1)}
\newcommand{\Exp}[1]{\mathrm{Exp}(#1)}
\title{Oracle Efficient Private Non-Convex Optimization}
\date{}
\author{Seth Neel\thanks{Wharton Statistics Department, University of Pennsylvania. Email: \href{mailto:sethneel93@gmail.com}{sethneel93@gmail.com}} \and Aaron Roth\thanks{Department of Computer and Information Sciences, University of Pennsylvania.  This material is based upon work supported by the United States Air Force and DARPA under Contract No FA8750-16-C-0022. Any opinions, findings and conclusions or recommendations expressed in this material are those of the author(s) and do not necessarily reflect the views of the United States Air Force and DARPA. Email: \href{mailto:aaroth@cis.upenn.edu}{aaroth@cis.upenn.edu.}} \and Giuseppe Vietri\thanks{Department of Computer Science and Engineering, University of Minnesota. Supported by the GAANN fellowship from the U.S. Department of Education. Email: \href{mailto:vietr002@umn.edu}{vietr002@umn.edu}} \and Zhiwei Steven Wu\thanks{Department of Computer Science and Engineering, University of Minnesota. Supported in part by a Google Faculty Research Award, a J.P. Morgan Faculty Award, a Mozilla research grant, and a Facebook Research Award. Email: \href{mailto:zstevenwu@cmu.edu}{zstevenwu@cmu.edu}}}
\begin{document}
\maketitle

\begin{abstract}
One of the most effective algorithms for differentially private learning and optimization is \emph{objective perturbation}. This technique augments a given optimization problem (e.g. deriving from an ERM problem) with a random linear term, and then exactly solves it. However, to date, analyses of this approach crucially rely on the convexity and smoothness of the objective function, limiting its generality. We give two algorithms that extend this approach substantially. The first algorithm requires nothing except boundedness of the loss function, and operates over a discrete domain. Its privacy and accuracy guarantees hold even without assuming convexity. This gives an oracle-efficient optimization algorithm over arbitrary discrete domains that is comparable in its generality to the exponential mechanism. The second algorithm operates over a continuous domain and requires only that the loss function be bounded and Lipschitz in its continuous parameter. Its privacy analysis does not require convexity. Its accuracy analysis does require convexity, but does not require second order conditions like smoothness. Even without convexity, this algorithm can be generically used as an oracle-efficient optimization algorithm, with accuracy evaluated empirically. We complement our theoretical results with an empirical evaluation of the non-convex case, in which we use an integer program solver as our optimization oracle. We find that for the problem of learning linear classifiers, directly optimizing for 0/1 loss using our approach can out-perform the more standard approach of privately optimizing a convex-surrogate loss function on the Adult dataset.
\end{abstract} 

\thispagestyle{empty} \setcounter{page}{0}
\clearpage

\section{Introduction}
Consider the general problem of optimizing a function $\loss:\lossSet^n \times \mathcal{W}\mapsto \mathbb{R}$ defined with respect to a dataset $\DS \in \lossSet^n$ and a parameter $w \in \mathcal{W}$: $\arg\min_w \loss(\DS, w)$. This general class of problems is ubiquitous, and includes combinatorial optimization problems, empirical risk minimization problems, and synthetic data generation problems amongst others. We say that such a function $\loss$ is $1$-sensitive in the dataset $\DS$ if changing one datapoint in $\DS$ can change the value of $\loss(\DS, w)$ by at most 1, for any parameter value $w$. Suppose that we want to solve an optimization problem like this subject to the constraint of differential privacy. The \emph{exponential mechanism} provides a powerful, general-purpose, and often error-optimal method to solve this problem \cite{MT07}. It requires no assumptions on the function other than that it is $1$-sensitive (this is a minimal assumption for privacy: more generally, its guarantees are parameterized by the sensitivity of the function). It has indeed been used to solve private learning \cite{KLNRS08}, combinatorial optimization \cite{GLMRT10}, and synthetic data generation problems \cite{BLR08} subject to differential privacy, often optimally. Unfortunately, the exponential mechanism is generally infeasible to run: its implementation (and the implementation of related mechanisms, like ``Report-Noisy-Max'' \cite{DR14}) requires the ability to enumerate the parameter range $\mathcal{W}$, making it infeasible in most learning settings. When $\loss(\DS,w)$ is continuous, convex, and satisfies second order conditions like strong convexity or smoothness, the situation is better: there are a  number of algorithms available, including simple output perturbation \cite{objective1} and objective perturbation \cite{objective1,objective2,objective3}. This partly mirrors the situation in non-private data analysis, in which convex optimization problems can be solved quickly and efficiently, and most non-convex problems are NP-hard in the worst case.

In the non-private case, however, the worst-case complexity of optimization problems does not tell the whole story. For many non-convex optimization problems, such as integer programming, there are fast heuristics that not only reliably succeed in optimizing functions deriving from real inputs, but can also certify their own success. In such settings, can we leverage these heuristics to obtain practical private optimization algorithms? In this paper, we give two novel analyses of \emph{objective perturbation} algorithms that extend their applicability to 1-sensitive non-convex problems (and more generally, bounded sensitivity functions). We also get new results for \emph{convex} problems, without the need for second order conditions like smoothness or strong convexity. Our first algorithm operates over a discrete parameter space $\mathcal{W}$, and requires no further assumptions beyond 1-sensitivity for either its privacy or accuracy analysis --- i.e. it is comparable in generality to the exponential mechanism. The second algorithm operates over a continuous parameter space $\mathcal{W}$, and requires only that $\loss(\DS,w)$ be Lipschitz-continuous in its second argument. Its privacy analysis does not require convexity. Its accuracy analysis does --- but does not require any 2nd order conditions. We implement our first algorithm to directly optimize classification error over a discrete set of linear functions on the Adult dataset, and find that it substantially outperforms private logistic regression.

\subsection{Related work}
Objective perturbation was first introduced by \cite{objective1}, and analyzed for the special case of strongly convex functions. Its analysis was subsequently improved and generalized \cite{objective2,objective3} to apply to smooth convex functions, and to tolerate a small degree of error in the optimization procedure. Our paper is the first to give an analysis of objective perturbation without the assumption of convexity, and the first to give an accuracy analysis without making second order assumptions on the objective function even in the convex case. \cite{objective1} also introduced the related technique of \emph{output perturbation} which perturbs the exact optimizer of a strongly convex function.

The work most closely related to our first algorithm is \cite{neel2018use}, who also give a similar ``oracle efficient'' algorithm for non-convex differentially private optimization: i.e. reductions from non-private optimization to private optimization. Their algorithm (``Report Separator Perturbed Noisy Max'', or RSPM) relies on an implicit perturbation of the optimization objective by augmenting the dataset $\DS$ with a random collection of examples drawn from a \emph{separator set}. The algorithms which we introduce in this paper are substantially more general: because they directly perturb the objective, they do not rely on the existence of a small separator set for the class of functions in question. They also can yield improved accuracy bounds in cases where both techniques apply: see Sections \ref{sub:compare} and \ref{sec:experiment}.  \cite{neel2018use} also give a generic method to transform an algorithm (like ours) whose privacy analysis depends on the success of the optimization oracle, to an algorithm whose privacy analysis does not depend on this, whenever the optimization heuristic can certify its success (integer program solvers have this property). Their method applies to the algorithms we develop in this paper.  Our second algorithm crucially uses an $\ell_1$ stability result recently proven by \cite{suggala2019online} in the context of online learning.

\section{Preliminaries}\label{sec:prelims}
We first define a dataset, a loss function with respect to a dataset, and the two types of optimization oracles we will call upon. We then define differential privacy, and state basic properties.

A dataset $\DS \subset \lossSet^{n}$ is defined as a (multi)set of $G$-Lipschitz loss functions $l$. (Note that frequently, the dataset will explicitly contain ``data points'', and the loss functions will be implicitly defined). For $w$ in a parameter space $\cW \subset \mathbb{R}^{d}$, the loss on dataset $\dataset$ is defined to be
$$\lossF{\DS, w} = \sum_{l\in\DS} l(w)$$
We will define two types of perturbed loss functions, and the corresponding oracles which are assumed to be able to optimize each type. These will be used in our discrete objective perturbation algorithm in Section \ref{sec:discrete} and our sampling based objective perturbation algorithm in Section \ref{sec:sampling} respectively.

Given a vector $\eta \in \mathbb{R}^{d}$, we define the perturbed loss to be:
$$\bar L(\DS, w, \eta) = \frac{\loss(\DS,w) -\inner{\eta}{w}}{n}$$
where $n=|\DS| $ is the size of the dataset $\DS$.
This is simply the loss function augmented with a linear term.

Let $\pi$ be the normalization function formally defined in Section~\ref{sec:discrete}, which informally maps a $d$-dimensional vector with $l_2$ norm at most $D$ to
a unit vector in $\mathbb{R}^{d+1}$. Given a vector $\eta \in \mathbb{R}^{d+1}$ We define the perturbed normalized loss to be:
$$\lossregnormed_\pi(\DS, w, \eta) = \frac{\loss(\DS,w) -\inner{\eta}{\bproj{w}}}{n}$$
%
%
\begin{definition}[Approximate Linear Optimization Oracle]\label{approxlin}
Given as input a dataset $\DS \in \mathcal{L}^n$ and a $d$-dimensional vector $\eta$, an $\alpha$-approximate linear optimization oracle $\oracle_\alpha$ returns $w^*  = \oracle_\alpha(\DS, \eta) \in \mathcal{W}$ such that

$$\lossregnormed(\DS, w^*, \eta) \leq \inf_{w \in \mathcal{W}}\lossregnormed(\DS, w, \eta)  + \alpha$$
\end{definition}
When $\alpha = 0$ we say $\oracle$ is a linear optimization oracle.

\begin{definition}[Approximate Normalized Linear Optimization Oracle]\label{approxlinpi}
Given as input a dataset $\DS \in \mathcal{L}^n$ and a $(d+1)$-dimensional vector $\eta$, an $\alpha$-approximate normalized linear optimization oracle $\oracle_{\alpha, \pi}$ returns $w^* = \oracle_{\alpha, \pi}(\DS, \eta) \in \mathcal{W}$ such that
$$\lossregnormed_\pi(\DS, w^*, \eta) \leq \inf_{w \in \mathcal{W}}\lossregnormed_\pi(\DS, w, \eta)  + \alpha$$
\end{definition}
When $\alpha = 0$ we say $\oracle_{\pi}$ is a normalized linear optimization oracle. We remark that while it seems less natural to assume an oracle for the normalized perturbed loss which involves the non-linearity $\pi(w)$, in the supplement we show how we can linearize this term by introducing an auxiliary variable and introducing a convex constraint. This is ultimately how we implement this oracle in our experiments.

\begin{definition}
A randomized algorithm $\algo: \lossSet^n \rightarrow \mathcal{W}$ is an $(\alpha, \beta)$-minimizer for $\mathcal{W}$ if for every dataset $\DS \in \lossSet^n$, with probability $1-\beta$, it outputs $\algo(\DS)=w$ such that:
$$\frac{1}{n}\loss(\DS, w) \leq \inf_{w^*\in\mathcal{W}} \frac{1}{n}\loss(\DS, w^*) + \alpha$$
\end{definition}

Certain optimization routines will have guarantees only for discrete parameter spaces:

\begin{definition}[Discrete parameter spaces]  A $\tau$-separated discrete parameter space $\discspace \subseteq \mathbb{R}^d$ is a discrete set such that for any pair of distinct vectors $w_1, w_2 \in \discspace$ we have $\|w_1-w_2\|_2\geq\tau$.
\end{definition}
Finally we define differential privacy.

We call two \emph{data sets} $\DS, \DS' \in \lossSet^n$  \emph{neighbors} (written
as $\DS \sim \DS'$) if $\DS$ can be derived from $\DS'$ by replacing a single
loss function $l_i \in \DS'$ with some other element of
$\lossSet$.

\begin{definition}[Differential Privacy \cite{DMNS06,DKMMN06}]
\label{defintion:dp}
Fix $\eps,\delta \geq 0$. A randomized algorithm $A:\lossSet^*\rightarrow \mathcal{O}$ is $(\eps,\delta)$-differentially private (DP) if for every pair of neighboring data sets $\DS \sim \DS' \in \lossSet^*$, and for every event $\Omega \subseteq \mathcal{O}$:
$$\Pr[A(\DS) \in \Omega] \leq \exp(\eps)\Pr[A(\DS') \in \Omega] + \delta.$$
\end{definition}
The Laplace distribution centered at $0$ with scale $b$ is the
distribution with probability density function
$\Lap{z|b} = \frac{1}{2b}e^{-\frac{|z|}{b}}$. We also make use of the exponential distribution which has density function  $\Exp{z|b} = \frac{1}{b}e^{-\frac{z}{b}}$ if $z\geq 0$ and $\Exp{z|b} = 0$ otherwise.

\section{Objective perturbation over a discrete decision space}
\label{sec:discrete}
In this section we give an objective perturbation algorithm that is
$(\eps, \delta)$-differentially private for any non-convex Lipschitz
objective over a discrete decision space $\discspace$. We assume that
each $l\in \mathcal{L}$ is $G$-Lipschitz over $\discspace$
w.r.t. $\ell_2$ norm: that is for any $w, w'\in \discspace$,
$|l(w) - l(w')| \leq G \|w - w'\|_2$. Note that if $l$ takes values in
$[0, 1]$, then we know $l$ is also $1/\tau$-Lipschitz due to the
$\tau$-separation in $\discspace$.


\textbf{The Normalization Trick}. The key technical innovation in this section of the paper is the modification of the standard objective perturbation algorithm by introducing a normalization step: rather than minimizing the perturbed loss, we minimize the perturbed normalized loss. Let $D$ be a bound on the maximum $\ell_2$ norm of any vector in
$\discspace$. We will make use of a normalization onto the unit sphere in one higher dimension. The normalization function $\bprojI:\mathbb{R}^d\rightarrow \mathbb{R}^{d+1}$ is defined as:
$$\bproj{w} = \left(w_1, \ldots, w_d, D\sqrt{1-\|w\|_2^2/D^2}\right) \frac{1}{D}$$
Note that $\|\bproj{w}\|_2 = 1$ for all $w\in\discspace$,  and also that for any $w,w' \in\discspace$,
\begin{equation}
\label{projfact}
\| \bproj{w} -\bproj{w'}\|_2^2  \geq \frac{1}{D^2} \|w-w'\|_2^2,
\end{equation}
since $\| \bproj{w} -\bproj{w'}\|_2^2 = \frac{1}{D^2}(\|w-w'\|_2^2 +  D^2(\sqrt{1-\|w\|_2^2/D^2}-\sqrt{1-\|w'\|_2^2/D^2})^2) \geq \frac{1}{D^2} \|w-w'\|_2^2$. This shows that normalizing into the $(d+1)$-dimensional sphere  can't force points too much closer together than they start. The intuition behind the privacy proof is that the linear perturbation term provides stability; specifically we will argue that for any value of the noise $\eta$ than induces a particular minimizer $\hat{w}$ on a dataset $\DS$, there is a nearby value $\eta'$ that would induce $\hat{w}$ on any adjacent dataset $\DS'$. The argument proceeds by contradiction: suppose that there existed some $v \neq \hat{w}$ that was the minimizer on $\DS'$. Then since $\DS$ and $\DS'$ only differ in one data point, the difference between the normalized losses of $v$ and $\hat{w}$ on $\DS'$ can be broken into three terms: the difference between their scores on $\DS$ and the original perturbation term $\eta$, the difference between their scores on the two data points that differ between $\DS, \DS'$,
%
and the inner product between their normalized difference $\pi(\hat w)-\pi(v)$
with $\eta' - \eta$.
 The first term is positive by virtue of $\hat{w}$ being the minimizer on the original dataset $\DS$. The second term can be lower bounded using Lipschitzness of $\mathcal{L}$.
The third term is lower bounded using the fact that $\eta'-\eta$ is chosen to maximize the inner product $\inner{\eta'-\eta }{\pi(\hat w) - \pi(v)}$ by making the change in noise $\eta'-\eta$ move in the direction of $\pi(\hat w)$
We can only guarantee this has a greater inner product with $\hat{w}$ than $v$ if $\|\pi(\hat{w})\|_2 = \|\pi(v)\|_2$
, which is the rationale behind the normalization trick. Then the whole expression can be shown to be lower bounded by $0$, contradicting the fact that $v$ is the unique minimizer of the normalized loss on $\DS'$.

\begin{algorithm}[H]
\label{algorithm:objper}
\SetAlgoLined
\KwIn{$\dataset = \{ l_i \}_{i=1}^n$, oracle $\oracle_{\pi}$ over $\discspace$, privacy parameters $\epsilon$, $\delta$}
$\sigma \leftarrow  \frac{7 G D^2 \sqrt{\ln{1/\delta}}}{\tau \epsilon}$ \;
Draw random vector $\eta \sim \mathcal{N} \big(0, \sigma^2 \big)^{d+1}$ and use the projected oracle to solve:
\[
  \hat w = \oracle_{\pi}(\DS, \eta) \in \argmin_{w\in \discspace}\lossregnormed_\pi(\DS, \eta, w)
\]

\KwOut{$\hat w$}
\caption{Objective Perturbation over Discrete Space \ObjDisc}
\end{algorithm}
%
%
%
\paragraph{Privacy Analysis}
We now  prove that $\ObjDisc$ is differentially private, illustrating the importance of the normalization trick. We then state an accuracy bound, which follows from a simple tail bound on the random linear perturbation term.
\begin{theorem}\label{dproof}
Algorithm \ref{algorithm:objper} is $(\epsilon, \delta)$-differentially private.
\end{theorem}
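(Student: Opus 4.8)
The plan is to read Algorithm~\ref{algorithm:objper} as a Gaussian-mechanism-type argument in which the ``statistic'' that is stable under changing one datapoint is the \emph{preimage} of each output. Write $W(\DS,\eta)=\oracle_\pi(\DS,\eta)$, and for $w\in\discspace$ set $S_{\DS}(w)=\{\eta\in\mathbb{R}^{d+1}: W(\DS,\eta)=w\}$; equivalently $w=W(\DS,\eta)$ iff $w$ maximizes $\inner{\eta}{\bproj{w}}-\lossF{\DS,w}$ over $\discspace$. Since $\discspace$ is $\tau$-separated inside a ball of radius $D$ it is finite, so off a measure-zero union of ``tie'' hyperplanes the maximizer, hence $W(\DS,\cdot)$, is well defined. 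The core claim is a one-datapoint stability statement: with $c=4GD^2/\tau$, for every neighboring pair $\DS\sim\DS'$ and every $w\in\discspace$,
\begin{equation}\label{eq:stab}
S_{\DS}(w)+c\,\bproj{w}\ \subseteq\ S_{\DS'}(w).
\end{equation}
That is, replacing one datapoint moves the half-space description of ``$w$ wins'' by only a bounded amount, and it can be recentered by translating the noise purely along $\bproj{w}$.

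First I would prove \eqref{eq:stab}, following the contradiction sketch in the text. Fix $\eta\in S_{\DS}(w)$ with $w=\hat w$ the unique maximizer on $\DS$, put $\eta'=\eta+c\,\bproj{\hat w}$, and take any $v\neq\hat w$ in $\discspace$. Writing $\DS'=(\DS\setminus\{l_i\})\cup\{l_i'\}$ and expanding the difference of normalized perturbed losses on $\DS'$ yields three terms:
\[
n\!\left(\lossregnormed_\pi(\DS',\eta',v)-\lossregnormed_\pi(\DS',\eta',\hat w)\right)
=\underbrace{\big(\lossF{\DS,v}-\lossF{\DS,\hat w}\big)+\inner{\eta}{\bproj{\hat w}-\bproj{v}}}_{>\,0\text{ by optimality of }\hat w\text{ on }\DS}
\;+\;\Delta_v\;+\;c\,\inner{\bproj{\hat w}}{\bproj{\hat w}-\bproj{v}},
\]
with $\Delta_v=\big(l_i'(v)-l_i(v)\big)-\big(l_i'(\hat w)-l_i(\hat w)\big)$ isolating the differing datapoint. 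By $G$-Lipschitzness $\Delta_v\ge-2G\|\hat w-v\|_2$; and since $\|\bproj{\hat w}\|_2=\|\bproj{v}\|_2=1$ we get $\inner{\bproj{\hat w}}{\bproj{\hat w}-\bproj{v}}=\tfrac12\|\bproj{\hat w}-\bproj{v}\|_2^2\ge\tfrac1{2D^2}\|\hat w-v\|_2^2$ using \eqref{projfact}. This last step is exactly where the normalization is essential: for un-normalized $w$, $\inner{w}{w-v}$ need not dominate $\|w-v\|_2^2$ and need not even be nonnegative. Using $\|\hat w-v\|_2\ge\tau$, the last two terms sum to at least $\|\hat w-v\|_2(\tfrac{c\tau}{2D^2}-2G)\ge0$ once $c\ge4GD^2/\tau$, so the whole expression is strictly positive; hence $\hat w$ is the unique maximizer on $\DS'$ at noise $\eta'$, i.e. $\eta'\in S_{\DS'}(\hat w)$, establishing \eqref{eq:stab}.

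Then I would convert \eqref{eq:stab} into the $(\eps,\delta)$ guarantee. Let $\mu$ be the $\mathcal N(0,\sigma^2 I_{d+1})$ law. For any $\Omega\subseteq\discspace$, $\Pr[W(\DS,\eta)\in\Omega]=\sum_{w\in\Omega}\mu(S_{\DS}(w))$, and by \eqref{eq:stab} together with translation invariance of Lebesgue measure $\mu(S_{\DS'}(w))\ge\int_{S_{\DS}(w)}\pdfN(\zeta+c\,\bproj{w})\,d\zeta$, so
\[
\Pr[W(\DS,\eta)\in\Omega]-e^{\eps}\Pr[W(\DS',\eta)\in\Omega]\ \le\ \sum_{w\in\Omega}\int_{S_{\DS}(w)}\big(\pdfN(\zeta)-e^{\eps}\pdfN(\zeta+c\,\bproj{w})\big)^{+}\,d\zeta.
\]
On $S_{\DS}(w)$ the density ratio $\pdfN(\zeta)/\pdfN(\zeta+c\,\bproj{w})=\exp\!\big((2c\inner{\zeta}{\bproj{w}}+c^2)/(2\sigma^2)\big)$ exceeds $e^{\eps}$ only when the scalar $\inner{\zeta}{\bproj{w}}$ is large --- a one-dimensional Gaussian tail --- and with $\sigma=\Theta\!\big(GD^2\sqrt{\ln(1/\delta)}/(\tau\eps)\big)$ as in the algorithm (so $\sigma$ comfortably exceeds the $c\sqrt{2\ln(1.25/\delta)}/\eps$ of the vanilla Gaussian mechanism, since $4\sqrt2<7$) this is the usual calculation. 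The step I expect to be the main obstacle is making this reduction honest: because the recentering direction $\bproj{w}$ depends on the realized output one cannot cite the Gaussian mechanism as a black box, and a crude union bound over $w\in\discspace$ would cost a spurious $\sqrt{\ln|\discspace|}$. The resolution is to charge the $\delta$ slack only on the region where $W(\DS,\cdot)$ and $W(\DS',\cdot)$ actually disagree --- which by the margin analysis implicit in \eqref{eq:stab} is sandwiched between finitely many parallel ``decision-boundary'' slabs of width $O(GD)$ --- and to show that this region, intersected with the large-$\inner{\zeta}{\bproj{w}}$ event, has $\mu$-mass at most $\delta$; it is the intersection, not the tail event alone, that is controlled, and that is what pins down the constant in $\sigma$.
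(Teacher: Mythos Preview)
Your argument for the stability inclusion \eqref{eq:stab} is correct and is essentially the paper's Lemma~\ref{lemma:indicator} (the ``indicator lemma''), proved by the same contradiction; you have correctly isolated why the normalization is needed to make $\inner{\bproj{\hat w}}{\bproj{\hat w}-\bproj{v}}=\tfrac12\|\bproj{\hat w}-\bproj{v}\|_2^2$. The paper packages this with two auxiliary facts---almost-sure uniqueness of the argmin (Lemma~\ref{lemma:uniquew}) and a Gaussian density-ratio bound off a set $C$ of mass at most $\delta$ (Lemma~\ref{lem:gauss_ratio})---and then finishes with a direct chain of integral inequalities rather than anything like your slab geometry.

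Concretely, the paper writes $\Pr[\oracle_\pi(\DS,\eta)\in S]$ as an integral, splits off the bad set $C$ (paying $\delta$ once), decomposes by $\hat w\in S$ using disjointness of the preimages $\noiseset{\DS,\hat w}$, applies Lemma~\ref{lemma:indicator} to replace $\mathbb I\{\eta\in\noiseset{\DS,\hat w}\}$ by $\mathbb I\{g_{\hat w}(\eta)\in\noiseset{\DS',\hat w}\}$, applies the ratio bound $p(\eta)\le e^\eps p(g_{\hat w}(\eta))$ on $C^c$, changes variables $\eta\mapsto g_{\hat w}(\eta)$ (unit Jacobian), and finally recombines using disjointness of the sets $\noiseset{\DS',\hat w}$ to get $e^\eps\Pr[\oracle_\pi(\DS',\eta)\in S]+\delta$. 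No union bound over $\discspace$ appears: the $\delta$ is paid once, and the sum over $\hat w$ collapses because the preimages partition the noise space.

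Your proposed workaround---controlling the $\mu$-mass of an intersection of the disagreement region with ``decision-boundary slabs of width $O(GD)$''---is not what the paper does and, as stated, is not right: the set where $W(\DS,\cdot)$ and $W(\DS',\cdot)$ disagree is a union of polyhedral shells of the preimages, not parallel slabs, and it is not clear how intersecting with the one-dimensional Gaussian tail would pin down the constant in $\sigma$ without effectively reproducing the density-ratio calculation. If you want to make your own version rigorous, the cleaner route is the paper's: use disjointness of $\{\noiseset{\DS,\hat w}\}_{\hat w}$ so that the bad-ratio region is charged exactly once, then run the change-of-variables chain, rather than trying to control the disagreement region geometrically.
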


\begin{proof}
For any realized noise vector $\eta$, we write $\wopt = \oracle_\pi(\DS, \eta)$ as the output. Now consider the set of mappings $\mathcal{G}: W_\tau \times \mathbb{R}^{d+1}\rightarrow\mathbb{R}^{d+1}$. If we can show:

\begin{itemize}
\item $\exists \noisemap{} \in \mathcal{G}$ s.t. $\wopt = \oracle_\pi(\DS',\noisemap{}(\hat{w}, \eta))$ (Lemma~\ref{lemma:indicator})
\item $\pr{\eta} \approx \pr{\noisemap{}(\hat{w}, \eta)}$ (Lemma~\ref{lem:gauss_ratio})
\item W.p.$1, \argmin_{w \in W_\tau}\bar{\mathcal{L}}(\mathcal{D}, w, \eta)$ is unique, (Lemma~\ref{lemma:uniquew})
\end{itemize}
then the probability of outputting any particular $w$ on input $\DS$  is close to the corresponding probability, on input $\DS'$ as desired. Lemma~\ref{lem:gauss_ratio} follows from simple properties of the Gaussian distribution, and Lemma~\ref{lemma:uniquew} from discreteness of $W_\tau$, which are established in the Appendix. We focus on proving Lemma~\ref{lemma:indicator}, which is the central part of the proof.

\begin{lemma}
\label{lemma:uniquew}
Fix any $\tau$-separated vector space $\discspace$. For every dataset $\DS$ there is a subset $B\subset \mathbb{R}^{d+1}$ such that $\pr{\eta \in B} = 0$ and for any $\eta \in \mathbb{R}^{d+1}\setminus B$:
$$ \exists \text{ a unique minimizer } \wopt \in \argmin_{w\in \discspace} \lossF{\DS,w}-\inner{\eta}{\bproj{w}}$$
\end{lemma}

Denote  the set of of noise vectors that induce output $w$ on dataset $\DS$ by $\noiseset{\DS, w} =\{\eta : \oracle_\pi(\DS, \eta)=w  \}$.
Define our mapping $\noisemap{} \in \mathcal{G}$ by:
$$\noisemap{}(\hat{w}, \eta) \overset{\text{def}}{=} \noisemap{\wopt}(\eta) = \eta + \frac{2}{\tau } G D^2 \bproj{\wopt}  $$
Note that the vector $\eta'-\eta = \noisemap{\hat{w}}(\eta)-\eta$ is parallel to $ \pi(\hat{w})$ . 
Lemma~\ref{lem:gauss_ratio} shows that with high probability over the draw of $\eta$, $\pr{\eta} \approx \pr{\noisemap{\wopt}(\eta)}$.
\begin{lemma}
\label{lem:gauss_ratio}
Let $\eta \sim \mathcal{N}(0, \sigma^2)^{d+1}$, $ \sigma \gets \frac{7G^2D^2\sqrt{\log(1/\delta)}}{\tau \epsilon}$, and $w \in \mathcal{W}_\tau$. Then there exists a set $C \subset \mathbb{R}^{d+1}$ such that $\pr{\eta \in C^{c}} \geq 1-\delta$, and for all $r \in C^{c}$ if $p$ denotes the probability density function of $\eta$:
$$
\frac{\pdf(r)}{\pdf(\noisemap{w}(r))} \leq e^{\epsilon}
$$
\end{lemma}

\begin{lemma}
\label{lemma:indicator}
Fix any $\wopt$ and any pair of neighboring datasets $\DS,\DS'$. Let $\eta \in \noiseset{\DS, \wopt}$ be such that $\wopt$ is the unique minimizer $\wopt \in \inf_w \loss(\DS, w)-\inner{\eta}{\bproj{w}}$. Then $\noisemap{\wopt}(\eta)\in \noiseset{\DS', \wopt}$. Hence:
$$\mathbb{I}\{ \eta \in \noiseset{\DS, \wopt}\} \leq \mathbb{I}\{ \noisemap{\wopt}(\eta)\in \noiseset{\DS', \wopt}\}$$
\end{lemma}
\begin{proof}
Let $c=\frac{4}{\tau } G D^2$. Suppose that $\wcon \neq \hat{w}$ is the output on neighboring dataset $\DS'$ when the noise vector is $\noisemap{\wopt}(\eta)$. We will derive a contradiction. Since $v$ is the unique minimizer on $\DS'$:
\begin{align*}
\Big(\lossF{\dataset', \wcon}-\inner{\noisemap{\wopt}(\eta)}{\bproj{\wcon}}\Big)
 - \Big(\lossF{\dataset', \wopt}-\inner{\noisemap{\wopt}(\eta)}{ \bproj{\wopt}} \Big)
&<0
\end{align*}
Let $i$ be the index where $\DS$ and $\DS'$ are different, such that $l_i \in \DS$ and $l_i^{'} \in \DS'$. Then $\lossF{\DS', w} = \lossF{\DS, w} - l_i(w) + l_i^{'}(w)$. Now, write the loss function in terms of $\dataset$ and rearranging terms:
\begin{align*}
\bigg[ \Big(\lossF{\dataset, \wcon}-\inner{\eta}{ \bproj{\wcon}}\Big) - \Big(\lossF{\dataset, \wopt}-\inner{\eta}{ \bproj{\wopt}} \Big)\bigg]
+ \big(l_i(\wopt)-l_i(\wcon)\big) - \big(l_i^{'}(\wopt)-l_i^{'}(\wcon)\big)\\
+\inner{c\bproj{\wopt}}{ \bproj{\wopt}} - \inner{c\bproj{\wopt}}{ \bproj{\wcon}} < 0
\end{align*}

Since $\wopt$ is a unique minimizer for $\DS$ and $\eta$ then term in the square bracket is positive. Hence:
\begin{align*}
 \big(l_i(\wopt)-l_i(\wcon)\big) - \big(l_i^{'}(\wopt)-l_i^{'}(\wcon)\big)
 +\inner{c\bproj{\wopt}}{ \bproj{\wopt} - \bproj{\wcon}} < 0
\end{align*}

Since $l_i, l_i'$  are $G$-Lipschitz functions $\big(l_i(\wopt)-l_i(\wcon)\big) - \big(l_i^{'}(\wopt)-l_i^{'}(\wcon)\big) \geq -2G\|\wopt-\wcon\|_2$.  Now comes the importance of the normalization trick: because  $||\pi(v)||_2 = ||\pi(\hat{w})||_2 = 1$, $\inner{c\bproj{\wopt}}{ \bproj{\wopt}-\bproj{\wcon}} = \frac{c}{2}\|\bproj{\wopt}-\bproj{\wcon}\|_2^2$, by expanding  $\|\bproj{\wopt}-\bproj{\wcon}\|_2^2$. Note that without the normalization, this last term could be negative, breaking the contradiction argument.
Substituting this becomes:
\begin{align*}
-2G\|\wopt-\wcon\|_2 +\frac{c}{2}\|\bproj{\wopt}-\bproj{\wcon}\|_2^2 < 0
\end{align*}
For the next step we use inequality~\eqref{projfact}. We also apply the assumption
that for two vectors $\hat w  \neq v $ the following inequality holds $\|\wopt-\wcon\|_2\geq \tau$.

%
\begin{align*}
\frac{c}{2D^2} \|\wopt-\wcon\|_2^2 &< 2G\|\wopt-\wcon\|_2 && (\text{Inequality~\eqref{projfact}}) \\
\frac{c}{2D^2}\|\wopt-\wcon\|_2 &< 2G && (\text{Divide both sides by } \|\wopt-\wcon\|_2)\\
c\|\wopt-\wcon\|_2 &< 4 G D^2 \\
c\tau &< 4 G D^2 && (\text{By assumption }\|\wopt-\wcon\|_2\geq \tau)\\
c &< \frac{4 G D^2 }{\tau } && (\text{Divide both sides by } \tau)
\end{align*}
%
This contradicts $c = \frac{4 G D^2 }{\tau }$.
\end{proof}

Putting the Lemmas together:
\begin{align}
\notag
\pr{\oracle_\pi(\DS,\eta) \in S}
&=  \pr{\eta \in \bigcup_{\wopt} \noiseset{\DS, \wopt}} \\
\notag
&=\int_{\mathbb{R}^{d+1}} \pdf(\eta) \mathbb{I}\{ \eta \in \bigcup_{\wopt} \noiseset{\DS, \wopt}\}d\eta\\
%
\label{eq:line}
&=\int_{(\mathbb{R}^{d+1}\setminus B)\setminus C} \pdf(\eta) \mathbb{I}\{ \eta\in \bigcup_{\wopt} \noiseset{\DS, \wopt}\}d\eta +\int_{C} \pdf(\eta) \mathbb{I}\{ \eta\in\bigcup_{\wopt} \noiseset{\DS, \wopt}\}d\eta\\
\label{eq:lineI}
&\leq\int_{(\mathbb{R}^{d+1}\setminus C)\setminus B} \pdf(\eta) \mathbb{I}\{ \eta \in\bigcup_{\wopt} \noiseset{\DS, \wopt}\}d\eta + \delta  \\
\notag
&=\sum_{\wopt \in S} \int_{\mathbb{R}^{d+1}\setminus (C \cup B)} \pdf(\eta) \mathbb{I}\{ \eta \in \noiseset{\DS, \wopt}\}d\eta
+ \delta\\
\label{eq:lineII}
&\leq \sum_{\wopt \in S}  \int_{\mathbb{R}^{d+1}\setminus (C \cup B)} \pdf(\eta) \mathbb{I}\{ \noisemap{\wopt}(\eta) \in \noiseset{\DS', \wopt}\}d\eta + \delta \\
%
\label{eq:lineIII}
&\leq \sum_{\wopt \in S} \int_{\mathbb{R}^{d+1}\setminus (C \cup B)}
e^\epsilon\pdf(\noisemap{\wopt}(\eta)) \mathbb{I}\{ \noisemap{\wopt}(\eta) \in \noiseset{\DS', \wopt}\}d\eta   + \delta \\
%
\label{eq:lineIV}
&= \sum_{\wopt \in S}  \int_{\mathbb{R}^{d+1}\setminus (\noisemap{\wopt}(C) \cup \noisemap{\wopt}(B))}
e^\epsilon \pdf(\eta)\mathbb{I}\{\eta \in \noiseset{\DS', \wopt}\} \bigg|\tfrac{\partial \noisemap{\wopt}}{\partial \eta} \bigg| d\eta   \\
\notag
&\leq e^\epsilon \sum_{\wopt \in S}  \int_{\mathbb{R}^{d+1}}  \pdf(\eta)\mathbb{I}\{\eta \in \noiseset{\DS', \wopt}\} d\eta + \delta \quad  \\
\notag
&= e^\epsilon  \pr{\eta \in \bigcup_{\wopt} \noiseset{\DS', \wopt}} \\
\notag
&= e^\epsilon \pr{\oracle_\pi(\DS',\eta) \in S} + \delta
\end{align}
where equality \eqref{eq:line} follows from
Lemma~\ref{lemma:uniquew}. Then inequality \eqref{eq:lineI} holds because
  $C$ is chosen such that $\pr{\eta \in C} < \delta$.
 The inequality \eqref{eq:lineII} is from lemma \ref{lemma:indicator}
 and inequality \eqref{eq:lineIII} is from the bounded ration lemma \ref{lem:gauss_ratio}. Lastly, equality \eqref{eq:lineIV} follows
 because the mapping
 $\eta \rightarrow \noisemap{\wopt}(\eta)$ is one-to-one. Also note that
 $\bigg|\frac{\partial \noisemap{\wopt}}{\partial \eta} \bigg| = 1$
This completes the proof.
\end{proof} 
%
%
We now state the accuracy guarantee, which follows from a standard Gaussian tail bound. Then in Subsection~\ref{sub:compare} we compare this guarantee to the accuracy guarantee for the competing RSPM method for learning discrete hyperplanes, in order to shed some light on the accuracy guarantee in practice.
\begin{theorem}[Utility]\label{theorem:objperUtility}
Algorithm \ref{algorithm:objper} is an ($\alpha,\beta$)-minimizer for $\discspace^*$ with
$$\alpha  = \frac{14 G D^2\sqrt{2(d+1)\ln{(4/\beta)}\ln{(1/\delta)}}}{n\tau \epsilon}$$
\end{theorem}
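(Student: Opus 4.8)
The plan is to use the fact that the oracle returns an \emph{exact} minimizer of the perturbed normalized loss, so the suboptimality of $\wopt$ with respect to the unperturbed loss is entirely attributable to the linear noise term $\inner{\eta}{\bproj{w}}$, which is uniformly controlled because every $\bproj{w}$ is a unit vector. Concretely, I would first fix an arbitrary comparator $w^{*}\in\discspace$ and invoke the defining guarantee of the ($\alpha=0$) normalized linear optimization oracle, namely $\lossregnormed_\pi(\DS,\wopt,\eta)\le\lossregnormed_\pi(\DS,w^{*},\eta)$; multiplying through by $n$ this reads $\loss(\DS,\wopt)-\inner{\eta}{\bproj{\wopt}}\le\loss(\DS,w^{*})-\inner{\eta}{\bproj{w^{*}}}$. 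Rearranging and applying Cauchy--Schwarz together with the triangle inequality and $\|\bproj{w}\|_2=1$ for all $w\in\discspace$ gives $\loss(\DS,\wopt)-\loss(\DS,w^{*})\le\inner{\eta}{\bproj{\wopt}-\bproj{w^{*}}}\le\|\eta\|_2\,\|\bproj{\wopt}-\bproj{w^{*}}\|_2\le 2\|\eta\|_2$. Since the right-hand side does not depend on $w^{*}$, dividing by $n$ and taking the infimum over $w^{*}$ yields, deterministically, $\tfrac1n\loss(\DS,\wopt)\le\inf_{w^{*}\in\discspace}\tfrac1n\loss(\DS,w^{*})+\tfrac{2}{n}\|\eta\|_2$.

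It then remains to bound $\|\eta\|_2$ with probability $1-\beta$. Since $\sigma=\frac{7GD^{2}\sqrt{\ln(1/\delta)}}{\tau\epsilon}$, the target $\tfrac2n\|\eta\|_2\le\alpha$ is exactly $\|\eta\|_2^{2}\le 2(d+1)\ln(4/\beta)\,\sigma^{2}$, i.e.\ $\|\eta\|_2^{2}/\sigma^{2}\le 2(d+1)\ln(4/\beta)$, and $\|\eta\|_2^{2}/\sigma^{2}$ is a $\chi^{2}$ variable with $d+1$ degrees of freedom. I would close the argument with a standard chi-squared tail bound --- e.g.\ the Laurent--Massart inequality $\Pr[\chi^2_k\ge k+2\sqrt{kt}+2t]\le e^{-t}$ with $t=\ln(1/\beta)$ --- and then absorb the lower-order terms using $2\sqrt{kt}\le k+t$ and $\ln(4/\beta)>1$ to conclude $k+2\sqrt{kt}+2t\le 2k\ln(4/\beta)$ for $k=d+1\ge 2$. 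Combining this with the deterministic bound above gives the claimed $\alpha=\frac{14GD^{2}\sqrt{2(d+1)\ln(4/\beta)\ln(1/\delta)}}{n\tau\epsilon}$.

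The deterministic reduction is immediate; the only place any care is needed is the final concentration step, where the constant $14\sqrt2$ has to be reproduced exactly. A coordinatewise Gaussian tail bound combined with a union bound over the $d+1$ coordinates would also give a bound of the right order, but it would cost a spurious $\sqrt{\ln(d+1)}$ factor, so a genuine Euclidean-norm concentration inequality is what is required to match the statement; the slack inequalities above are chosen precisely so that all the $k$- and $\ln(1/\beta)$-dependent remainders collapse into the single $\sqrt{(d+1)\ln(4/\beta)}$ term.
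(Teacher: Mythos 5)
Your proposal is correct and is essentially the paper's intended argument: the paper only remarks that the utility bound ``follows from a standard Gaussian tail bound,'' and your two-step derivation --- the deterministic reduction $\tfrac1n\loss(\DS,\wopt)\le\inf_{w^*}\tfrac1n\loss(\DS,w^*)+\tfrac2n\|\eta\|_2$ via the oracle's optimality, Cauchy--Schwarz, and $\|\bproj{w}\|_2=1$, followed by a $\chi^2_{d+1}$ (Laurent--Massart) concentration bound on $\|\eta\|_2^2/\sigma^2$ --- is a complete and correct instantiation that reproduces the stated constant. The only (explicitly flagged, harmless) caveat is that your absorption of the lower-order terms uses $k=d+1\ge 2$, i.e.\ $d\ge 1$.
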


\subsection{Comparing $\ObjDisc$ and RSPM}\label{sub:compare}
While both \ObjDisc{} and the RSPM algorithm of \cite{neel2018use}
require discrete parameter spaces, $\ObjDisc$ is substantially more
general in that it only requires the loss functions be Lipschitz,
whereas RSPM assumes the loss functions are bounded in $\{0, 1\}$ (and
hence $1/\tau$ Lipschitz over $\discspace$) and assumes the existence
of a small separator set (defined in the supplement). Nevertheless, we
might hope that in addition to greater generality, $\ObjDisc$ has
comparable or superior accuracy for natural classes of learning
problems. We show this is indeed the case for the fundamental task of
privately learning discrete hyperplanes, where it is better by a
linear factor in the dimension. We define the RSPM algorithm, for
which we must define the notion of a separator set, in the supplement.

\begin{theorem}[RSPM Utility \cite{neel2018use}]
\label{theorem:rspmUtility}
Let $\discspace^*$ be a discrete parameter space with a separator set of size $m$. The Gaussian RSPM algorithm is an oracle-efficient $(\alpha, \beta)$-minimizer for $\discspace^*$ for:
$$\alpha = O\bigg(\frac{m\sqrt{m \ln(2m/\beta) \ln(1/\delta)}}{\epsilon n}\bigg)$$
\end{theorem}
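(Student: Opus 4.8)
The claim is a utility statement about the Gaussian RSPM algorithm of \cite{neel2018use}, so the plan is to recall the construction and then bound the suboptimality of its output by the size of the separator perturbation. Recall that a separator set of size $m$ for $\discspace^*$ is (essentially) a set $S=\{x_1,\dots,x_m\}$ such that any two distinct hypotheses in $\discspace^*$ disagree on some $x_j$ --- equivalently the restriction map $w\mapsto(w(x_1),\dots,w(x_m))$ is injective on $\discspace^*$ --- and that Gaussian RSPM augments the input dataset by adding, for each $j\in[m]$ and each label $b\in\{0,1\}$, a Gaussian number $N_{j,b}\sim\mathcal N(\mu,\sigma^2)$ of copies of the labeled example $(x_j,b)$, then returns $\wopt\in\argmin_{w\in\discspace^*}\widetilde L(w)$ via a single exact ERM oracle call, where $\widetilde L(w)=\loss(\DS,w)+P(w)$ and $P(w)=\sum_{j=1}^m\big(N_{j,1}\,\mathbb{I}\{w(x_j)=0\}+N_{j,0}\,\mathbb{I}\{w(x_j)=1\}\big)$ is the weighted number of misclassified separator examples. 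The privacy analysis of \cite{neel2018use} calibrates $\sigma=\Theta\big(\sqrt{m\ln(1/\delta)}/\epsilon\big)$; I would take this value as given and use only its accuracy consequences (oracle-efficiency is then immediate, since the one oracle call is the only nontrivial step).

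\textbf{Bounding the suboptimality.} Let $w^*\in\argmin_{w\in\discspace^*}\loss(\DS,w)$. Since $\wopt$ minimizes $\widetilde L$ over $\discspace^*$,
$$\loss(\DS,\wopt)=\widetilde L(\wopt)-P(\wopt)\le\widetilde L(w^*)-P(\wopt)=\loss(\DS,w^*)+\big(P(w^*)-P(\wopt)\big).$$
The key observation is that $P$ depends on $w$ only through its signature on $S$ and that, for each $j$, the $j$-th summand of $P(w)$ equals either $N_{j,0}$ or $N_{j,1}$; hence even maximizing and minimizing over all $2^m$ sign patterns (a superset of the signatures actually attained on $\discspace^*$) gives $\max_w P(w)-\min_w P(w)\le\sum_{j=1}^m|N_{j,0}-N_{j,1}|$, so $\loss(\DS,\wopt)-\loss(\DS,w^*)\le\sum_{j=1}^m|N_{j,0}-N_{j,1}|$. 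Each difference $N_{j,0}-N_{j,1}$ is distributed as $\mathcal N(0,2\sigma^2)$, so by a Gaussian tail bound together with a union bound over $j\in[m]$, with probability at least $1-\beta$ we have $\sum_{j=1}^m|N_{j,0}-N_{j,1}|\le 2m\sigma\sqrt{\ln(2m/\beta)}$. Dividing by $n$ and substituting $\sigma=\Theta(\sqrt{m\ln(1/\delta)}/\epsilon)$ then yields, with probability at least $1-\beta$,
$$\tfrac1n\loss(\DS,\wopt)\le\tfrac1n\loss(\DS,w^*)+O\!\left(\frac{m\sqrt{m\ln(2m/\beta)\ln(1/\delta)}}{\epsilon n}\right),$$
which is exactly the asserted $(\alpha,\beta)$-minimizer bound.

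\textbf{Main obstacle.} The optimization and concentration steps above are elementary; the parts I expect to need genuine care are (i) that the multiplicities $N_{j,b}$ must actually be nonnegative integers, so the algorithm uses an appropriately shifted and rounded (truncated) Gaussian and one must verify the sub-Gaussian tail survives the discretization, and (ii) establishing the noise scale $\sigma=\Theta(\sqrt{m\ln(1/\delta)}/\epsilon)$ itself, which rests on a change-of-variables/shift argument on the $2m$-dimensional noise vector (whose relevant $\ell_2$-sensitivity is $\Theta(\sqrt m)$) and is a fact about the mechanism's design --- I would invoke \cite{neel2018use} for it rather than reprove it here.
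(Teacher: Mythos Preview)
The paper does not prove this theorem; it is stated with attribution to \cite{neel2018use} and used as a black box for the comparison in Section~\ref{sub:compare}. So there is no ``paper's own proof'' to compare against.

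Your argument is nonetheless essentially the standard one and would establish the bound. A couple of remarks to tighten it. First, your description of Gaussian RSPM is a variant of, but not identical to, the version the paper records in the appendix: there the algorithm draws a single Gaussian weight $\eta_j\sim\mathcal N(0,\sigma^2)$ for each separator element $e_j\in U$ and calls a \emph{weighted} optimization oracle on $\sum_{l_i\in\DS}l_i(w)+\sum_{j=1}^m\eta_j e_j(w)$, with $\sigma=\tfrac{7\sqrt{m\ln(1/\delta)}}{\epsilon}$. There are no integer multiplicities and no per-label pair $(N_{j,0},N_{j,1})$; the weights are real-valued and may be negative. This dissolves your obstacle~(i): no truncation or rounding is needed, and the Gaussian tail applies directly. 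Your decomposition $\loss(\DS,\wopt)-\loss(\DS,w^*)\le P(w^*)-P(\wopt)$ then goes through verbatim with $P(w)=\sum_j\eta_j e_j(w)$, and since $e_j(w)\in\{0,1\}$ one gets $P(w^*)-P(\wopt)\le\sum_j|\eta_j|$, after which the union-bounded Gaussian tail and the stated $\sigma$ give exactly the claimed $\alpha$. Second, your obstacle~(ii) is handled exactly as you say: the scale $\sigma=\Theta(\sqrt{m\ln(1/\delta)}/\epsilon)$ is the privacy calibration from \cite{neel2018use}, and the paper simply quotes it.
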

Let  $I_\tau$ be a $\tau$ discretization of $[-1, 1]^d$, e.g. $I_\tau = [-1, -1 + \tau, \ldots 0, \tau, 2\tau, \ldots 1]^{d}$. Let $W_\tau$ be the subset of vectors in this discretization that lie within the unit Euclidean ball: $W_\tau = I_\tau \cap S(1)^{d}$. $W_\tau$ is $\tau$-separated since any two distinct $w, w'$ differ in at least one coordinate by at least $\tau$. Moreover $W_\tau$ admits a separator set of size $m = \frac{2(d-1)}{\tau}$ (see the Appendix of \cite{neel2018use}. Since the loss functions $l_i(w) = \textbf{1}\{w \cdot x_i  \geq 1 \} \in \{0, 1 \}$ and $W_\tau$ is $\tau$-separated, the loss functions $l_i$ are $\frac{1}{\tau}$-Lipschitz.
By Theorem~\ref{theorem:rspmUtility}, RSPM has accuracy bound:
$$\alpha_{\text{RSPM}} = O\left(\frac{d \sqrt{d \log(d/\beta \tau)\log(1/\delta)}}{\tau \sqrt{\tau} \epsilon n}\right)$$
By Theorem~\ref{theorem:objperUtility} $\ObjDisc$ has accuracy bound:
$$
\alpha_{\ObjDisc} = O\left(\frac{\sqrt{d\log(1/\beta)\log(1/\delta)}}{n\tau^2 \epsilon}\right)
$$


Thus, in this case, \ObjDisc{} has an accuracy bound that is different by
a factor of roughly $d\sqrt{\tau}$. However, the bound of \ObjDisc~ is better only when $\tau$ is greater than $1/d^2$, pressing the question of how to set this parameter. The trade-off is that setting $\tau$ too large makes the algorithm \ObjDisc{} add too much noise to the objective, and our accuracy guarantee degrades very fast. On the other hand, if $\tau$ is too large, then we can miss the optimal solution to a large extent.
However, for practical scenarios, setting the value of $\tau$ to be much larger than $\tfrac{1}{d^2}$ gives a discretized decision space such that the optimal answer is not too far from the optimal on the corresponding continuous decision space. For instance, in our experiments, we set $\tau$ equals to one.

\section{Objective perturbation for lipschitz functions}

\newcommand{\OP}{\eta_1,\ldots, \eta_m}
\newcommand{\OracleAve}{A_{\oracle}}
\newcommand{\Oout}{M}

\label{sec:sampling}
We now present an objective perturbation algorithm (paired with an additional output perturbation step), which applies to arbitrary parameter spaces. The privacy guarantee holds for (possibly non-convex) Lipschitz loss functions, while the accuracy guarantee applies only if the loss functions are convex and bounded. Even in the convex case, this is a substantially more general statement than was previously known for objective perturbation: we don't require any second order conditions like strong convexity or smoothness (or even differentiability). Our guarantees also hold with access only to an $\alpha$-approximate optimization oracle.
We present the full algorithm in Algorithm~\ref{algorithm:sampling}.
It 1) uses the approximate linear oracle (in
Definition~\ref{approxlin}) to solve polynomially many perturbed optimization objectives, each with an independent random perturbation, and 2) perturbs the average of these
solutions with Laplace noise.

Before we proceed to our analysis, let us first introduce some
relevant parameters. Let $\mathcal{W}$ have $\ell_\infty$ diameter
$\infdia$, and $\ell_2$ diameter $D_2$. We assume that the loss
functions $l_i \in \mathcal{L}$ are $G$-Lipschitz with respect to
$\ell_1$ norm, and assume the loss functions are scaled to take values
in $[0,1]$. Our utility analysis requires convexity in the loss
functions, and essentially follows from the high-probability bounds on
the linear perturbation terms in the first stage and the output
perturbation in the second stage.
The privacy analysis of this algorithm crucially depends on a
stability lemma proven by \cite{suggala2019online} in the context of
online learning, and does not require convexity.\footnote{Compared to the bound in \cite{suggala2019online}, our bound has an additional factor of 2
since our neighboring relationship in Definition \ref{defintion:dp} is
defined via replacement whereas in \cite{suggala2019online} the
stability is defined in terms of adding another loss function.}

\begin{algorithm}[h]
\SetAlgoLined
\KwIn{Approximate optimization oracle $\mathcal{O}_\alpha$, a dataset $\DS = \{ l_i \}_{i=1}^n$, privacy parameters $\eps$, $\delta$.}
%
$\gamma \longleftarrow \frac{\sqrt{\eps}}{\sqrt{n}} d^{5/4} \sqrt{D_2}$\;
$m \longleftarrow \frac{\ln{(2d/\delta)}}{2\gamma^2}$\;
\For{ $k = 1$ \text{to} $m$}{
	$\sigma \longleftarrow \sqrt{\frac{ D_2\sqrt{2d}\epsilon}{ 250 G^2 d^2 \infdia ^2 (1 + \log(2/\beta))n}}$ \;
    Sample a random vector $\eta_k \sim \Exp{\sigma}^d$\;
    $$w_k \longleftarrow \approxoracleF{\dataset,\eta_k}$$
}
$\lambda \longleftarrow 4\infdia\gamma + 250 \sigma G d^2 \infdia ^2  + \frac{\alpha}{10G}$\;
$\mu \sim  \Lap{\lambda/\epsilon}^d$\;
\KwOut{$\frac{1}{m}\sum_{k=1}^m w_k + \mu$}
\caption{\ObjSamp}
\label{algorithm:sampling}
\end{algorithm}
%
\begin{theorem}[Utility]\label{sampleacc}
Assuming the loss functions are convex, Algorithm \ref{algorithm:sampling} is an $(\alpha',\beta)$-minimizer for $\frac{1}{n}\lossSet(w, \DS)$ with
%
%
%
\[
\alpha' = O\left(  \frac{d^{5/4} G D_\infty  \sqrt{D_2 \log(1/\beta)}}{\sqrt{\eps n}} + \frac{\alpha \log(1/\beta)}{\eps}\right)
\]
where $\alpha$ is the approximation error of the oracle
$\mathcal{O}_\alpha$.
\end{theorem}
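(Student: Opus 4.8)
Our plan is to split the excess loss of the output $\hat w = \tfrac1m\sum_{k=1}^m w_k + \mu$ into a \emph{first-stage} term, controlled by the linear perturbations $\eta_k$ together with the oracle error $\alpha$, and a \emph{second-stage} term, controlled by the Laplace perturbation $\mu$. Fix $w^\star \in \mathcal{W}$ with $\tfrac1n\lossF{\DS,w^\star}$ arbitrarily close to $\inf_{w\in\mathcal W}\tfrac1n\lossF{\DS,w}$, and write $\bar w = \tfrac1m\sum_k w_k$. Then
\[
\tfrac1n\lossF{\DS,\hat w} - \tfrac1n\lossF{\DS,w^\star}
= \Big(\tfrac1n\lossF{\DS,\bar w} - \tfrac1n\lossF{\DS,w^\star}\Big)
+ \Big(\tfrac1n\lossF{\DS,\bar w+\mu} - \tfrac1n\lossF{\DS,\bar w}\Big),
\]
which we call $(\mathrm I)+(\mathrm{II})$, and we bound each with failure probability at most $\beta/2$.

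For $(\mathrm I)$: the defining inequality of the $\alpha$-approximate linear oracle (Definition~\ref{approxlin}) at iteration $k$ is $\bar L(\DS,w_k,\eta_k)\le \bar L(\DS,w^\star,\eta_k)+\alpha$, i.e. $\tfrac1n\lossF{\DS,w_k}\le \tfrac1n\lossF{\DS,w^\star}+\tfrac1n\inner{\eta_k}{w_k-w^\star}+\alpha$. Averaging over $k$ and using convexity of each $l\in\DS$ --- the one and only place the accuracy analysis invokes convexity --- Jensen's inequality gives $\tfrac1n\lossF{\DS,\bar w}\le \tfrac1m\sum_k \tfrac1n\lossF{\DS,w_k}$, hence
\[
(\mathrm I) \;\le\; \frac{1}{mn}\sum_{k=1}^m \inner{\eta_k}{w_k-w^\star} + \alpha
\;\le\; \frac{D_2}{mn}\sum_{k=1}^m \|\eta_k\|_2 + \alpha,
\]
using Cauchy--Schwarz and the $\ell_2$-diameter bound $\|w_k-w^\star\|_2\le D_2$. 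Since $\eta_k\sim\Exp{\sigma}^d$ we have $\expect\|\eta_k\|_2\le \sigma\sqrt{2d}$, and $\tfrac1m\sum_k\|\eta_k\|_2$ is an average of $m$ independent sub-exponential quantities, so a Bernstein-type bound gives $\tfrac1m\sum_k\|\eta_k\|_2 \le \sigma\sqrt{2d} + O\!\big(\sigma\sqrt{\log(1/\beta)/m}\big)$ with probability $1-\beta/2$. Substituting the prescribed $\sigma$ shows $\tfrac{D_2}{n}\cdot\tfrac1m\sum_k\|\eta_k\|_2$ is of strictly lower order than the second-stage term, so $(\mathrm I)\le \alpha + \text{(lower order)}$.

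For $(\mathrm{II})$: since each $l$ is $G$-Lipschitz in $\ell_1$, $\tfrac1n|\lossF{\DS,\bar w+\mu}-\lossF{\DS,\bar w}|\le G\|\mu\|_1$ (note $\bar w+\mu$ need not lie in $\mathcal W$, so no diameter bound is available and we keep $\|\mu\|_1$). Now $\|\mu\|_1$ is a sum of $d$ i.i.d. $|\Lap{\lambda/\epsilon}|$ variables, so a standard tail bound for sums of Laplaces gives, with probability $1-\beta/2$, $\|\mu\|_1 \le \tfrac{\lambda}{\epsilon}\big(d + O(\sqrt{d\log(1/\beta)}) + O(\log(1/\beta))\big)$. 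Plugging in $\lambda = 4\infdia\gamma + 250\sigma G d^2\infdia^2 + \tfrac{\alpha}{10G}$ with $\gamma = \tfrac{\sqrt\epsilon}{\sqrt n}d^{5/4}\sqrt{D_2}$, the two perturbation summands $4\infdia\gamma$ and $250\sigma G d^2\infdia^2$ (which are of the same order up to a $\sqrt{\log(1/\beta)}$ factor) produce, after collecting the dimension and $\log$ factors, the leading $O\!\big(\tfrac{d^{5/4}G\infdia\sqrt{D_2\log(1/\beta)}}{\sqrt{\epsilon n}}\big)$-type term, while the summand $\tfrac{\alpha}{10G}$ produces the $O\!\big(\tfrac{\alpha\log(1/\beta)}{\epsilon}\big)$ term. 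A union bound over the two failure events, combining $(\mathrm I)$ and $(\mathrm{II})$ and absorbing the lower-order first-stage perturbation contribution, yields the claimed $(\alpha',\beta)$-minimizer guarantee.

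The step requiring the most care is the pair of noise-concentration estimates --- $\tfrac1m\sum_k\|\eta_k\|_2$ for the exponential perturbations and $\|\mu\|_1$ for the Laplace perturbation --- carried with the right $\log(1/\beta)$ dependence, together with the bookkeeping of which of the three summands in $\lambda$ dominates as a function of $\alpha$, so that the bound collapses to exactly the two stated terms. Everything else (the oracle inequality, the single use of Jensen, Cauchy--Schwarz with the $\ell_2$-diameter, and the $\ell_1$-Lipschitz bound) is routine. Note that $m$ and $\gamma$ enter the utility analysis only through $\lambda$ and through the need for $m$ to be large enough for the first-stage average to concentrate; their precise values are dictated by the separate privacy argument --- which is where the $\ell_1$-stability lemma of \cite{suggala2019online} is used --- and do not otherwise appear here.
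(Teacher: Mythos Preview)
Your overall architecture matches the paper's: the same two-stage decomposition, the single use of Jensen (convexity) to pass from $\bar w$ to the average of per-iterate losses, the oracle inequality combined with Cauchy--Schwarz and the $\ell_2$-diameter, and the $G$-Lipschitz-in-$\ell_1$ bound for the Laplace perturbation. The one genuine methodological difference is in how you control the first-stage randomness: the paper applies Hoeffding to the bounded quantities $\tfrac1n L(\DS,w_k)\in[0,1]$ and then bounds the \emph{expectation} $\expect_\eta[\tfrac1n L(\DS,w_{\DS,\eta})]$ via the oracle inequality, whereas you keep the oracle inequality at the sample level and instead concentrate $\tfrac1m\sum_k\|\eta_k\|_2$ directly. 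Your route avoids ever invoking boundedness of the loss, which is a mild gain in generality; the paper's route keeps the perturbation analysis entirely in expectation, which makes the arithmetic cleaner.

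Two bookkeeping points, however, do not line up with the stated theorem. First, your claim that the first-stage perturbation contribution $\tfrac{D_2}{n}\cdot\tfrac1m\sum_k\|\eta_k\|_2$ is ``strictly lower order'' is not right with the paper's choice of $\sigma$: that choice is made precisely to \emph{balance} this term against the $250\sigma G d^2 D_\infty^2$ piece of $\lambda$, so the two are of the same order and both appear in the leading constant (they get absorbed into the $O(\cdot)$, but neither is negligible). Second, and more consequential, your Laplace tail bound $\|\mu\|_1\le \tfrac{\lambda}{\epsilon}\bigl(d+O(\sqrt{d\log(1/\beta)})+O(\log(1/\beta))\bigr)$ carries an explicit leading factor of $d$; plugging this into $G\|\mu\|_1$ with $\lambda\supset 4D_\infty\gamma$ and $\gamma=\tfrac{\sqrt{\epsilon}}{\sqrt{n}}d^{5/4}\sqrt{D_2}$ yields a term of order $\tfrac{G D_\infty d^{9/4}\sqrt{D_2}}{\sqrt{\epsilon n}}$, not $d^{5/4}$. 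The paper obtains the stated $d^{5/4}$ only because it uses the bound $\|\mu\|_1\le (1+\log(2/\beta))\tfrac{\lambda}{\epsilon}$ (citing a tail bound for sums of exponentials) \emph{without} the extra dimension factor. So either you should adopt the paper's sharper Laplace tail bound at this step, or acknowledge that your version of the argument produces a larger power of $d$ than the theorem claims.
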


\begin{proof}
For $\mu_i \sim \Lap{\frac{\lambda}{\epsilon}}, |\mu_i|\sim \Exp{\frac{\lambda}{\epsilon}}$.
 By Theorem $5.1$ in \cite{svante} which gives upper tail bounds for the sum of independent exponential random variables, we can conclude that $||\mu||_1 \leq r =  (1 + \log(2/\beta))\frac{\lambda}{\epsilon}$ with probability $1-\beta/2$.

Then by $G$-Lipschitzness with respect to the $l_1$ norm, with probability $1-\beta/2$:
$$\frac{1}{n}\lossSet \left(\frac{1}{m}\sum_{k=1}^m w_k + \mu, \DS\right) \leq \frac{1}{n}\lossSet \left(\frac{1}{m}\sum_{k=1}^m w_k, \DS \right) + Gr$$
We now focus on $\frac{1}{n}\lossSet(\frac{1}{m}\sum_{k=1}^m w_k, \DS)$. By the convexity of the loss functions, we have:
$$\frac{1}{n}\lossSet\left(\frac{1}{m}\sum_{k=1}^m w_k, \DS\right) \leq \frac{1}{m}\sum_{k=1}^m \frac{1}{n}\lossSet\left(w_k, \DS\right)$$
%
Since each $\frac{1}{n}\lossSet(w_k, \DS)$ is bounded in $[0,1]$ (since each $l_i \in [0,1]$) and independent, by Hoeffding's Inequality (see Appendix) with probability $1-\beta/2$:
$$
\left|\frac{1}{m}\sum_{k=1}^m \frac{1}{n}\lossSet(w_k, \DS)- \mathbb{E}_{\eta}\left[\frac{1}{n}\lossSet\left(w_{\DS, \eta}, \DS\right)\right]\right| \leq \sqrt{\frac{\log(4/\beta)}{2m}}
 $$
 So it suffices to show that $\mathbb{E}_{\eta}[\frac{1}{n}\lossSet(w_{\DS, \eta}, \DS)] - \argmin_{w \in \mathcal{W}}\frac{1}{n}\lossSet(w, \DS)$ is small.
 Fix $\tilde{w} =  \argmin_{w \in \mathcal{W}}\frac{1}{n}\lossSet(w, \DS)$ and $\eta\in \mathbb{R}^d$.
Let $w_{\DS, \eta} = \approxoracleF{\DS, \eta}$ be the oracle's output when the dataset is $\DS$ and the realized vector $\eta$, where $\eta\sim \Exp{\sigma}^d$ is sampled from the exponential distribution.
 Now by definition of $ \approxoracle$, for $w_{\DS, \eta} \leftarrow  \approxoracleF{ \DS , \eta }$, we have
 \begin{align*}
 	\frac{1}{n}\lossSet(w_{\DS, \eta}, \DS) - \frac{1}{n}\langle w_{\DS, \eta}, \eta \rangle \leq \frac{1}{n}\lossSet(\tilde{w}, \DS) - \frac{1}{n}\langle \tilde{w}, \eta \rangle + \alpha,
 \end{align*}
 hence
 \begin{align*}
 	\frac{1}{n}\lossSet(w_{\DS, \eta}, \DS)-\frac{1}{n}\lossSet(\tilde{w}, \DS) \leq \frac{1}{n}\langle w_{\DS, \eta} - \tilde{w}, \eta \rangle + \alpha
 \end{align*}
Then by Cauchy-Schwarz inequality we get
$\langle w_{\DS, \eta}-\tilde{w}, \eta \rangle \leq \|w_{\DS, \eta}-\tilde{w}\|_2 \|\eta\|_2 \leq D_2 \|\eta\|_2$. Taking expectation with respect to the random variable $\eta$ we get the following bound:

\begin{align*}
	\mathbb{E}_{\eta}\left[\frac{1}{n}\lossSet\left(w_{\DS, \eta}, \DS\right)\right] - \argmin_{w \in \mathcal{W}}\frac{1}{n}\lossSet(w, \DS) \leq \frac{1}{n}D_2 \mathbb{E}_\eta\left[\|\eta\|_2\right]
\end{align*}
 Now by Jensen's inequality, $\mathbb{E}_\eta[\|\eta\|_2] \leq \sqrt{\mathbb{E}_\eta\left[\|\eta\|^2_2\right]} =
\frac{\sqrt{2d}}{{\sigma}}$, where the last equality is by the variance of the exponential distribution.
Putting it all together, with probability $1-\beta$:
\begin{align*}
\frac{1}{n}\lossSet\left(\frac{1}{m}\sum_{k=1}^m w_k + \mu, \DS\right) - \argmin_{w \in W}\frac{1}{n}\lossSet\left(w + \mu, \DS\right) \leq Gr + \gamma\sqrt{\log(4/\beta)/2} + \alpha + \frac{1}{n}D_2\frac{\sqrt{2d}}{\sigma},
\end{align*}
Plugging in the value of $r$, $\lambda$ and expanding we get the following long expression:
\begin{equation}\begin{split}\label{equation:etavalue}
&G(1 + \log(2/\beta))\frac{\lambda}{\epsilon} + \gamma\sqrt{\log(4/\beta)/2} + \alpha + \frac{1}{n}D_2\frac{\sqrt{2d}}{\sigma} \\
&=G(1 + \log(2/\beta))\frac{\Big( 4\infdia\gamma + 250\sigma G d^2 \infdia ^2  + \frac{\alpha}{10G} \Big)}{\epsilon} + \gamma\sqrt{\log(4/\beta)/2} + \alpha + \frac{1}{n}D_2\frac{\sqrt{2d}}{\sigma} \\
&=\gamma \bigg( \frac{4G\infdia(1 + \log(2/\beta))}{\epsilon} \bigg)+ 
 \sigma \bigg( \frac{ 250 G^2 d^2 \infdia ^2 (1 + \log(2/\beta))}{\epsilon} \bigg)+ 
 \alpha \bigg( \frac{(1 + \log(2/\beta))}{10\epsilon} \bigg)+ \\ 
 & \quad \quad
 \gamma \Big( \sqrt{\log(4/\beta)/2} \Big)+ 
 \frac{1}{\sigma}\bigg( \frac{1}{n}D_2\sqrt{2d} \bigg)+  
 \alpha \\
&=\gamma A +
 \sigma B +
 \alpha C +
 \gamma D +
 \frac{E}{\sigma} +
 \alpha \quad \quad (\text{Setting placeholders } A,B,C,D,E)\\
&=\gamma (A + D)+
 \sqrt{BE} +
 \alpha (C+1) \quad \quad (\sigma = \sqrt{\frac{E}{B}})
\end{split}\end{equation}
The last step of equation \ref{equation:etavalue} comes from replacing in the value of $\sigma = \sqrt{\frac{ D_2\sqrt{2d}\epsilon}{ 250 G^2 d^2 \infdia ^2 (1 + \log(2/\beta))n}} = \sqrt{\frac{E}{B}}$. Replacing back the values of $A,B,C,D,E$ results in:
\begin{equation*}\begin{split}
%
& =
\gamma \bigg(\frac{G(1 + \log(2/\beta))4\infdia}{\epsilon} + \sqrt{\log(4/\beta)/2} \bigg)+
 \sqrt{ \frac{ 250 G^2 d^2 \infdia ^2 D_2\sqrt{2d} (1 + \log(2/\beta))  }{\epsilon n}   } + \\
 & \quad \quad\alpha \bigg( \frac{(1 + \log(2/\beta))}{10\epsilon}+1 \bigg)
\end{split}\end{equation*}
Finally, note that by the choice of the parameter $\gamma$, the first term has order at most that of the second term, which gives our stated bound.
\end{proof}
%
%

\paragraph{Privacy analysis} Before we prove that \cref{algorithm:sampling} satisfies differential-private in \cref{theorem:samplingdp}, we give some useful lemmas.
In this section, we use the following notation: Let $\OP$ be a sequence of of $m$ i.i.d $d$-dimensional noise vectors and $\OracleAve(\DS, \OP) = \frac{1}{m}  \sum_{i=1}^m \approxoracle(\DS,\eta_i)$ is the average output of $m$ calls to an $\alpha$-approximate oracle.
\begin{lemma}\label{lemma:stabilitylemma}
[Stability lemma \cite{suggala2019online}]
For any pair of neighboring data sets $\DS, \DS'$. Let $\approxoracle(\DS,\eta)$ and $\approxoracle(\DS',\eta)$ be the output of an approximate oracle on datasets $\DS$ and $\DS'$ respectively, where $\eta$ is a ramdom variable sampled from the exponential distribution with parameter $\sigma$. Then,
\[
\mathbb{E}_\eta\big[ \Lone{\approxoracle(\DS,\eta)-\approxoracle(\DS',\eta)} \big] \leq  250 \sigma G d^2 \infdia ^2  + \frac{\alpha}{10G}
\]
\end{lemma}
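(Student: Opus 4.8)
This is an adaptation of the Follow-the-Perturbed-Leader $\ell_1$-stability bound of \cite{suggala2019online} to our replacement-based notion of neighboring datasets, so the plan is to reduce to their lemma and pay a factor of $2$. First observe that the $1/n$ normalization in $\lossregnormed$ is irrelevant to which point the oracle returns --- scaling the objective by a positive constant does not change its $\alpha$-approximate $\argmin$ (after rescaling $\alpha$ appropriately) --- so it suffices to reason about $\alpha$-approximate minimizers of the unnormalized perturbed objective $\loss(\DS, w) - \inner{\eta}{w}$ over $\cW$ with $\eta \sim \Exp{\sigma}^d$, independent coordinates. Write $\DS$ and $\DS'$ as differing only in coordinate $i$, with $l_i \in \DS$ replaced by $l_i' \in \DS'$, and let $\DS^-$ denote the common sub-multiset of the other $n-1$ loss functions, so that $\loss(\DS,\cdot) = \loss(\DS^-,\cdot) + l_i$ and $\loss(\DS',\cdot) = \loss(\DS^-,\cdot) + l_i'$. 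Thus passing from $\DS$ to $\DS'$ is the composition of \emph{removing} $l_i$ and then \emph{adding} $l_i'$, and each of $l_i, l_i'$ is a single element of $\mathcal{L}$, hence $G$-Lipschitz with respect to $\|\cdot\|_1$ and bounded in $[0,1]$ --- exactly the kind of single-loss-function change handled by \cite{suggala2019online}.

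The core input is the stability estimate of \cite{suggala2019online}: for $\eta \sim \Exp{\sigma}^d$ with independent coordinates, over a domain of $\ell_\infty$-diameter $\infdia$, adding or removing one $G$-Lipschitz (in $\ell_1$) loss function changes the $\alpha$-approximate minimizer by at most $125\,\sigma G d^2 \infdia^2 + \tfrac{\alpha}{20G}$ in expected $\ell_1$ distance --- exactly half of the bound we must prove. I would apply this once to the pair $(\DS^-,\DS)$ and once to $(\DS^-,\DS')$, instantiating the intermediate point as $\approxoracle(\DS^-,\eta)$ with the same oracle, and then combine via the triangle inequality and linearity of expectation:
\begin{align*}
\mathbb{E}_\eta\big[\Lone{\approxoracle(\DS,\eta)-\approxoracle(\DS',\eta)}\big]
&\le \mathbb{E}_\eta\big[\Lone{\approxoracle(\DS,\eta)-\approxoracle(\DS^-,\eta)}\big] + \mathbb{E}_\eta\big[\Lone{\approxoracle(\DS^-,\eta)-\approxoracle(\DS',\eta)}\big] \\
&\le 250\,\sigma G d^2 \infdia^2 + \tfrac{\alpha}{10G},
\end{align*}
which is the claimed bound. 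The only things to verify here are that the estimate of \cite{suggala2019online} is genuinely stated for $\alpha$-\emph{approximate} minimizers (so the $\alpha$-terms are legitimate) and that reconciling the oracle's normalized guarantee with the size-$(n-1)$ intermediate dataset does not inflate the constants; both are routine.

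For context, the estimate of \cite{suggala2019online} itself decomposes the $\ell_1$ displacement coordinate by coordinate: fixing $\eta_{-j}$, the $j$-th coordinate of the minimizer is a monotone non-decreasing function of the scalar $\eta_j$ taking values in an interval of width $\le \infdia$ (the only term coupling $\eta_j$ and $w_j$ is $-\eta_j w_j$, giving the objective decreasing differences in $(\eta_j, w_j)$), and it equals the negative derivative of the concave value function $\eta_j \mapsto \min_w[\loss(\cdot,w)-\inner{\eta}{w}]$. Adding a loss bounded in $[0,1]$ shifts this concave function pointwise by at most $1$, while $G$-Lipschitzness bounds how quickly its derivative can slide; the quantitative heart is a one-dimensional integral bound on $\mathbb{E}_{\eta_j\sim\Exp{\sigma}}$ of the gap between the two monotone functions, which when summed over the $d$ coordinates yields $125\,\sigma G d^2 \infdia^2$. \textbf{That one-dimensional estimate is the main obstacle}, and it is exactly what we import from \cite{suggala2019online}; the genuinely new content on our side is just the reduction above --- the factor of $2$ from the replacement neighbor relation, together with the bookkeeping around the $1/n$ normalization and the approximate oracle.
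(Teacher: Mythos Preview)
Your proposal is correct and matches the paper's treatment: the paper does not prove this lemma but imports it from \cite{suggala2019online}, noting in a footnote that the factor of $2$ arises because the neighboring relation here is replacement rather than addition. Your reduction via the intermediate dataset $\DS^-$ and the triangle inequality is exactly the content of that footnote, spelled out in detail.
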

The next lemma gives a concentration bound on the output of the optimization oracle.
\begin{lemma}
\label{lemma:close_to_mean_lemma}
If $m=\frac{\ln{(2d/\delta)}}{2\gamma^2}$ and $\OP$ are $m$ random objective perturbation terms, then for $0\leq \gamma\leq 1$, with probability $1-\delta/2$:
\[
\|\OracleAve(\DS, \OP) - \mathbb{E}_\eta[\approxoracle(\DS, \eta)]\|_1 \leq 2\infdia \gamma
\]
where $\OracleAve(\DS, \OP) = \frac{1}{m}  \sum_{i=1}^m \approxoracle(\DS,\eta_i)$ and the randomness is taken over the sampling of $\OP$.
\end{lemma}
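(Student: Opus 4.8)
The plan is to view $\OracleAve(\DS,\OP) = \frac{1}{m}\sum_{i=1}^m \approxoracle(\DS,\eta_i)$ as an empirical average of $m$ i.i.d.\ random vectors, each equal in distribution to $\approxoracle(\DS,\eta)$, and to control its $\ell_1$-deviation from its mean coordinate-by-coordinate using a Hoeffding bound together with a union bound over the $d$ coordinates. First I would fix a coordinate $j \in \{1,\dots,d\}$ and consider the real-valued random variables $X_i^{(j)} = (\approxoracle(\DS,\eta_i))_j$ for $i=1,\dots,m$. These are i.i.d., and since $\mathcal{W}$ has $\ell_\infty$-diameter $\infdia$, each $X_i^{(j)}$ lies in an interval of length at most $\infdia$ (fixing an arbitrary reference point $w_0 \in \mathcal{W}$, we have $|X_i^{(j)} - (w_0)_j| \leq \infdia$). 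By Hoeffding's inequality, for any $t > 0$,
\[
\Pr\left[\left|\frac{1}{m}\sum_{i=1}^m X_i^{(j)} - \mathbb{E}_\eta\big[(\approxoracle(\DS,\eta))_j\big]\right| \geq t\right] \leq 2\exp\!\left(-\frac{2mt^2}{\infdia^2}\right).
\]

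Next I would take a union bound over all $d$ coordinates: with probability at least $1 - 2d\exp(-2mt^2/\infdia^2)$, we have $|(\OracleAve(\DS,\OP))_j - \mathbb{E}_\eta[(\approxoracle(\DS,\eta))_j]| \leq t$ simultaneously for every $j$, and hence $\|\OracleAve(\DS,\OP) - \mathbb{E}_\eta[\approxoracle(\DS,\eta)]\|_1 \leq d\,t$. It remains to choose $t$ so that the deviation bound reads $2\infdia\gamma$ and the failure probability is $\delta/2$. Setting $t = 2\infdia\gamma/d$ makes the $\ell_1$ bound exactly $2\infdia\gamma$; then the failure probability is $2d\exp(-2m(2\infdia\gamma/d)^2/\infdia^2) = 2d\exp(-8m\gamma^2/d^2)$. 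Plugging in $m = \frac{\ln(2d/\delta)}{2\gamma^2}$ gives failure probability $2d\exp(-4\ln(2d/\delta)/d^2) = 2d(\delta/(2d))^{4/d^2}$, which is not obviously $\le \delta/2$ for all $d$; so the cleaner route is to choose $t = \infdia\gamma$ directly (aiming for an $\ell_1$ bound of $d\,\infdia\gamma$ would be too weak), or — matching the stated constant — to note that the intended argument is the scalar version applied after observing that $\|\cdot\|_1 \le \sqrt d \|\cdot\|_2$ is not tight enough either, so one simply applies Hoeffding to the single scalar quantity when $d$-dependence is absorbed. Concretely, I would apply Hoeffding once to each coordinate with $t = \infdia\gamma$ and union-bound, obtaining $\ell_\infty$-deviation $\le \infdia\gamma$ with probability $\ge 1 - 2d\exp(-2m\gamma^2)$; with $m = \frac{\ln(2d/\delta)}{2\gamma^2}$ this failure probability is exactly $2d\cdot\frac{\delta}{2d} = \delta$ (or $\delta/2$ after adjusting the constant in $m$), and the $\ell_1$ bound follows if one is content with the slightly weaker $d\infdia\gamma$; to recover the stated $2\infdia\gamma$ one instead treats $\approxoracle(\DS,\cdot)$ as living in a product of $d$ intervals and invokes a vector Hoeffding / bounded-differences inequality on $\|\frac1m\sum_i(\approxoracle(\DS,\eta_i) - \mathbb{E}[\approxoracle(\DS,\eta)])\|_1$, whose coordinatewise bounded differences are each $\le \infdia/m$.

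The main obstacle, and the step deserving the most care, is getting the constants and the $d$-dependence to line up with the claimed bound $2\infdia\gamma$ and failure probability $\delta/2$ using exactly $m = \frac{\ln(2d/\delta)}{2\gamma^2}$ samples: the naive coordinatewise-then-union-bound argument most directly yields an $\ell_\infty$ (not $\ell_1$) concentration, so the cleanest correct proof applies a bounded-differences (McDiarmid) inequality to the scalar function $f(\eta_1,\dots,\eta_m) = \|\OracleAve(\DS,\OP) - \mathbb{E}_\eta[\approxoracle(\DS,\eta)]\|_1$, which changes by at most $\infdia/m$ when any single $\eta_i$ is changed (by the triangle inequality and $\|\cdot\|_1 \le \infdia$ over $\mathcal{W}$ ... more precisely each coordinate changes by at most $\infdia/m$ and there are $d$ coordinates, so one must track the $d$ factor carefully), combined with a bound on $\mathbb{E}[f]$ itself via $\mathbb{E}[f] \le \sqrt{\mathbb{E}[f^2]}$ and a variance computation. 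I would carry out whichever of these variants makes the stated constants exact; the probabilistic content is entirely standard concentration of i.i.d.\ averages, and no convexity or privacy structure is used here.
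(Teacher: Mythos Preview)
Your instinct to do a coordinatewise Hoeffding bound followed by a union bound over the $d$ coordinates is exactly what the paper does. The paper normalizes each coordinate to lie in $[0,1]$, applies a Chernoff/Hoeffding bound to get $\Pr[|(\OracleAve)_j - \mathbb{E}[(\approxoracle)_j]| > 2\infdia\gamma] < \delta/(2d)$ for each $j$, and then finishes with a union bound over $j$.

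The concern you flag is real, and in fact it is a gap in the paper's own argument. The paper's final step reads
\[
\Pr\big[\|\OracleAve - \mathbb{E}[\approxoracle]\|_1 > 2\infdia\gamma\big] \;\leq\; \sum_{j=1}^d \Pr\big[|(\OracleAve)_j - \mathbb{E}[(\approxoracle)_j]| > 2\infdia\gamma\big],
\]
but this containment goes the wrong way: $\|v\|_1 > 2\infdia\gamma$ does \emph{not} imply that some coordinate satisfies $|v_j| > 2\infdia\gamma$. What the coordinatewise argument actually proves is the $\ell_\infty$ bound $\|\OracleAve - \mathbb{E}[\approxoracle]\|_\infty \leq 2\infdia\gamma$ with the stated probability, which only yields $\|\cdot\|_1 \leq 2d\,\infdia\gamma$. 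Your attempts to repair this via McDiarmid run into the same issue: changing one $\eta_i$ can change each of the $d$ coordinates of the average by up to $\infdia/m$, so the bounded-difference constant for $f(\eta_1,\dots,\eta_m)=\|\OracleAve-\mathbb{E}[\approxoracle]\|_1$ is $d\,\infdia/m$, not $\infdia/m$, and you again pick up a factor of $d$.

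So: you are following the paper's approach, and you have correctly identified that the argument as written delivers an $\ell_\infty$ (not $\ell_1$) concentration bound with these constants. With $m = \frac{\ln(2d/\delta)}{2\gamma^2}$ samples, the conclusion that is actually supported is $\|\OracleAve - \mathbb{E}[\approxoracle]\|_\infty \leq 2\infdia\gamma$ (hence $\|\cdot\|_1 \leq 2d\,\infdia\gamma$); the missing factor of $d$ propagates into the sensitivity parameter $\lambda$ downstream but does not affect the structure of the privacy proof.
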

%
%
The next lemma combines Lemma~\ref{lemma:stabilitylemma} and Lemma~\ref{lemma:close_to_mean_lemma} to get high probability sensitivity
bound for the average output of the approximate oracle.
%
\begin{lemma}[High Probability $\ell_1$-sensitivity]\label{lemma:sensitivity}
Let $\OP$ be  $m= \tfrac{\ln{(2d/\delta)}}{\gamma^2}$ samples from the exponential distribution with parameter $\sigma$.
For any pair of neighboring datasets $\DS, \DS'$,
with probability $1-\delta$ over the random draws of $\OP$, we have
\begin{equation}\Lone{\OracleAve(\DS, \OP) - \OracleAve(\DS', \OP)} \leq 4D_\infty\gamma + 250 \sigma G d^2 D_\infty ^2  + \frac{\alpha}{10G}\label{ohnice}
\end{equation}
where $\OracleAve(\DS, \OP) =  \frac{1}{m}  \sum_{i=1}^m \approxoracle(\DS,\eta_i)$ and $\OracleAve(\DS', \OP) =  \frac{1}{m}  \sum_{i=1}^m \approxoracle(\DS',\eta_i)$.
\end{lemma}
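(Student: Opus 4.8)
The plan is to derive the claimed high-probability $\ell_1$-sensitivity bound by combining the stability lemma (Lemma~\ref{lemma:stabilitylemma}) with the concentration lemma (Lemma~\ref{lemma:close_to_mean_lemma}) via a triangle inequality, after converting the in-expectation stability bound into a high-probability bound. First I would note that the quantity $\OracleAve(\DS,\OP)-\OracleAve(\DS',\OP)$ naturally decomposes as
\[
\OracleAve(\DS,\OP) - \OracleAve(\DS',\OP) = \big(\OracleAve(\DS,\OP) - \mathbb{E}_\eta[\approxoracle(\DS,\eta)]\big) + \big(\mathbb{E}_\eta[\approxoracle(\DS,\eta)] - \mathbb{E}_\eta[\approxoracle(\DS',\eta)]\big) + \big(\mathbb{E}_\eta[\approxoracle(\DS',\eta)] - \OracleAve(\DS',\OP)\big),
\]
so by the triangle inequality for $\|\cdot\|_1$ it suffices to bound each of the three terms.

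For the two outer terms, I would apply Lemma~\ref{lemma:close_to_mean_lemma} to $\DS$ and to $\DS'$ separately: each holds with probability at least $1-\delta/2$, giving $\|\OracleAve(\DS,\OP)-\mathbb{E}_\eta[\approxoracle(\DS,\eta)]\|_1 \le 2\infdia\gamma$ and likewise for $\DS'$, since the same sequence $\OP$ is used in both averages. A union bound over these two events shows they hold simultaneously with probability at least $1-\delta$. For the middle term I would use Jensen's inequality to pull the norm inside the expectation,
\[
\big\|\mathbb{E}_\eta[\approxoracle(\DS,\eta)] - \mathbb{E}_\eta[\approxoracle(\DS',\eta)]\big\|_1 = \big\|\mathbb{E}_\eta[\approxoracle(\DS,\eta) - \approxoracle(\DS',\eta)]\big\|_1 \le \mathbb{E}_\eta\big[\Lone{\approxoracle(\DS,\eta)-\approxoracle(\DS',\eta)}\big],
\]
and then invoke Lemma~\ref{lemma:stabilitylemma} to bound this by $250\sigma G d^2 \infdia^2 + \frac{\alpha}{10G}$. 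Adding the three bounds yields $2\infdia\gamma + 250\sigma G d^2\infdia^2 + \frac{\alpha}{10G} + 2\infdia\gamma = 4\infdia\gamma + 250\sigma G d^2\infdia^2 + \frac{\alpha}{10G}$, which is exactly \eqref{ohnice}.

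The only subtlety I anticipate is a bookkeeping mismatch in the number of samples: Lemma~\ref{lemma:close_to_mean_lemma} is stated with $m = \frac{\ln(2d/\delta)}{2\gamma^2}$ yielding failure probability $\delta/2$, whereas Lemma~\ref{lemma:sensitivity} as stated writes $m = \frac{\ln(2d/\delta)}{\gamma^2}$; since increasing $m$ only sharpens the concentration in Lemma~\ref{lemma:close_to_mean_lemma}, using the larger $m$ still gives each outer-term bound with probability at least $1-\delta/2$, so the union bound argument goes through unchanged and the final failure probability is at most $\delta$. This discrepancy is harmless for the statement, and the main conceptual content is simply the triangle-inequality decomposition plus Jensen — there is no real obstacle beyond getting the union bound accounting right.
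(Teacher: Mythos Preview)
Your proposal is correct and follows essentially the same approach as the paper: the paper also decomposes via the triangle inequality into the two concentration terms (handled by Lemma~\ref{lemma:close_to_mean_lemma} plus a union bound) and the middle expectation-difference term (handled by Jensen and Lemma~\ref{lemma:stabilitylemma}), arriving at the identical sum $4\infdia\gamma + 250\sigma G d^2 \infdia^2 + \alpha/(10G)$. Your observation about the $m$ discrepancy is apt and your resolution (larger $m$ only improves concentration) is sound; the paper glosses over this point.
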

\begin{proof}

To simplify notation let $X = \OracleAve(\DS, \OP)$ and $X' = \OracleAve(\DS', \OP)$. Note that $X$ and $X'$ are random variables and
the expectation $\mathbb{E}[X]$ is over the randomness of the optimization oracle $\oracle$: That is, $\mathbb{E}[X] = \mathbb{E}_\eta[\approxoracle(\DS, \eta)]$.
By Lemma \ref{lemma:close_to_mean_lemma}, if we run the approximate oracle $m=\frac{\ln{2d/\delta}}{2\gamma^2}$ times on each neighboring dataset $\DS, \DS'$, then by union bound we get that with probability $1-\delta$:
\begin{align}\label{eq:oracleconcentration}
\|X - \mathbb{E}_{\eta}[X]\|_1 \leq 2D_\infty\gamma \quad \text{ and } \quad
\|X' - \mathbb{E}_{\eta}[X']\|_1 \leq 2D_\infty\gamma
\end{align}
Adding both inequalities and applying the triangle inequality and linearity of expectations, we have
\begin{align*}
	\|X - X'\|_1 &=  \|X - X' -\expect[X] + \expect[X] - \expect[X'] + \expect[X']\|_1 \\
	&\leq  \|X - \expect[X]\|_1 + \|X' - \expect[X'] \|_1 + \|\expect[X] - \expect[X']\|_1   && (\text{triangle inequality}) \\
	&\leq  4 D_\infty\gamma + \|\expect[X] - \expect[X']\|_1  && (\text{\cref{eq:oracleconcentration}}) \\
	&\leq  4 D_\infty\gamma + \expect[\| X - X'\|_1] && (\text{Linearity of } \expect[\cdot])  \\
	&\leq  4 D_\infty\gamma +
250 \sigma G d^2 D_\infty ^2  + \frac{\alpha}{10G} && (\text{lemma \ref{lemma:stabilitylemma}})
\end{align*}
\end{proof}
%
%
\begin{theorem}\label{theorem:samplingdp}
Algorithm \ref{algorithm:sampling} is $(\epsilon, \delta)$-differentially private.
\end{theorem}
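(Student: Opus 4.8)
The plan is to combine the high-probability $\ell_1$-sensitivity bound of Lemma~\ref{lemma:sensitivity} with the Laplace mechanism, handling the failure probability of the sensitivity bound via an $(\epsilon,\delta)$-style argument. Write $\OracleAve(\DS,\OP) = \frac1m\sum_{k=1}^m \approxoracle(\DS,\eta_k)$ for the first-stage average, so that the algorithm outputs $\OracleAve(\DS,\OP) + \mu$ with $\mu \sim \Lap{\lambda/\epsilon}^d$, where $\lambda = 4\infdia\gamma + 250\sigma G d^2 \infdia^2 + \frac{\alpha}{10G}$ is exactly the right-hand side of \eqref{ohnice}. The key structural fact is that the randomness splits into two independent pieces: the draw of $\OP$ (which is \emph{shared} across the two neighboring executions, since we use the same $\eta_k$'s) and the draw of $\mu$.

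First I would fix neighboring datasets $\DS \sim \DS'$ and condition on the draw of $\OP$. By Lemma~\ref{lemma:sensitivity}, there is an event $\mathcal{E}$ over the draws of $\OP$ with $\Pr[\mathcal{E}] \geq 1-\delta$ on which $\Lone{\OracleAve(\DS,\OP) - \OracleAve(\DS',\OP)} \leq \lambda$. On this event, for any fixed realization of $\OP \in \mathcal{E}$, the two output distributions $\OracleAve(\DS,\OP) + \mu$ and $\OracleAve(\DS',\OP) + \mu$ are translates of the same $\Lap{\lambda/\epsilon}^d$ distribution by vectors at $\ell_1$-distance at most $\lambda$; the standard Laplace-mechanism pointwise density ratio bound then gives, for every measurable $\Omega$,
$$\Pr[\OracleAve(\DS,\OP) + \mu \in \Omega \mid \OP] \leq e^{\epsilon}\, \Pr[\OracleAve(\DS',\OP) + \mu \in \Omega \mid \OP].$$
Then I would integrate over $\OP$: splitting on $\mathcal{E}$ versus $\mathcal{E}^c$, the contribution from $\mathcal{E}$ is bounded by $e^\epsilon \Pr[\cM(\DS')\in\Omega]$ using the conditional bound above, and the contribution from $\mathcal{E}^c$ is at most $\Pr[\OP \notin \mathcal{E}] \leq \delta$. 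This yields $\Pr[\cM(\DS)\in\Omega] \leq e^\epsilon \Pr[\cM(\DS')\in\Omega] + \delta$, which is exactly $(\epsilon,\delta)$-differential privacy.

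The one subtlety to get right — and the step I expect to be the main obstacle — is the bookkeeping in the integration over $\OP$, specifically making sure the $\delta$ failure probability of the sensitivity event is charged correctly and only once. Because $\OP$ is shared between the two runs, conditioning on $\OP$ is legitimate and the Laplace bound is genuinely pointwise in $\OP$; one must be careful that Lemma~\ref{lemma:sensitivity} is invoked with the parameter choices ($m = \ln(2d/\delta)/(2\gamma^2)$, $\sigma$, $\gamma$ as set in Algorithm~\ref{algorithm:sampling}) that make its right-hand side equal to the scale parameter $\lambda$ driving the Laplace noise, so that no slack is lost. I would also note that convexity is nowhere used here — the privacy argument only needs Lemma~\ref{lemma:sensitivity}, whose proof rests on the stability Lemma~\ref{lemma:stabilitylemma} of \cite{suggala2019online}, which holds for arbitrary (possibly non-convex) Lipschitz losses.
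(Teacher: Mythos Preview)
Your proposal is correct and follows essentially the same approach as the paper: condition on the high-probability event (from Lemma~\ref{lemma:sensitivity}) that the averaged oracle output has $\ell_1$-sensitivity at most $\lambda$, apply the Laplace mechanism pointwise in $\OP$ on that event, and charge the complement to the additive $\delta$. Your remark that convexity plays no role in the privacy argument also matches the paper's observation.
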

\begin{proof}[Proof sketch]
  Given a pair of neighboring data sets $\DS,\DS'$, we will condition
  on the set of noise vectors $\OP$ satisfy the
  $\ell_1$-sensitivity bound \eqref{ohnice}, which occurs with
  probability at least $1 - \delta$. Then the privacy guarantee
  follows from the use of Laplace mechanism.
\end{proof}

\begin{proof}
Fix two neighboring dataset $\DS, \DS'$ and any event $S\subset \mathbb{R}^d$. Let $\Oout(\DS)$ be a random variable denoting \cref{algorithm:sampling}'s output on dataset $\DS$. We want to show that $\pr{\Oout(\DS) \in S} \leq \exp(\varepsilon) \pr{\Oout(\DS')\in S} + \delta$.

Let us introduce some notation. We denote by $\OracleAve(\DS, \OP)$ the average of $m$ runs of $\approxoracle$ with dataset $\DS$ and sequence of i.i.d noise vectors $\OP $, sampled i.i.d from the exponential distribution. Hence, we can write the output of $\Oout(\cdot)$ as
as $\Oout(\DS) = \OracleAve(\DS, \OP) + \mu$, where $\OP$ is the objective perturbation term and $\mu$ is the output perturbation term.
Note that $\Oout(\DS)$ is a random variable that depends on the random perturbation and the output perturbation. If we fix the objective perturbation term $\OP$, we can write the output ad $\Oout(\DS, \OP)$.

Following Lemma \ref{lemma:stabilitylemma}, we let $\lambda \leftarrow 4D_\infty\gamma + 250 \sigma G d^2 D_\infty ^2  + \frac{\alpha}{10G}$ and define the event $B$ as
$$B = \{(\OP) \in \mathbb{R}^{(m,d)} : \| \OracleAve(\DS, \OP) - \OracleAve(\DS', \OP) \|_1 \leq \lambda \}$$
where $\lambda$ is the $\ell_1$-norm sensitivity bound from Lemma \ref{lemma:sensitivity}. Then, by the same lemma, if $\OP$ is drawn independently from the exponential distribution then
$\pr{B}\geq 1-\beta$.

Now we are ready for the main argument. Consider the joint probability $\pr{\Oout(\DS) \in S \land  B}$ and write it as:
\begin{equation}
\begin{split}\label{eq:conditional}
\pr{\Oout(\DS) \in S\land  B}
= \pr{\Oout(\DS) \in S|  B}\pr{ B} \\
\end{split}
\end{equation}

For the next part of the proof, we let $\pdf(\cdot)$ be the exponential distribution's joint probability density functions. We will upper bound the conditional probability $\pr{\Oout(\DS) \in S|  B}$. First, note that if we fix $(\OP)\in B$, then we can write the probability of $\Oout(\DS, \OP)\in S$ as
\begin{align}
\label{eq:conditionaloutputperturbation}
\pr{\Oout(\DS, \OP) \in S} = \prob{\mu\sim \Lap{\lambda/\eps}^d}{\OracleAve(\DS, \OP) + \mu \in S}
\end{align}
Furthermore, by Lemma \ref{lemma:sensitivity}, we have that $\|\OracleAve(\DS, \OP) - \OracleAve(\DS', \OP)\|_1 \leq \lambda$ for any $(\OP)\in B$. Therefore, conditioned on event $B$ the function $\OracleAve(\cdot, \OP)$ has bounced $\ell_1$ sensitivity, and since the Laplace mechanism is $\eps$-differentially private it follows by definition that
\begin{align}\label{eq:laplce}
\prob{\mu\sim \Lap{\lambda/\eps}}{\OracleAve(\DS, \OP) + \mu \in S|B}
\leq \exp(\eps)\prob{\mu\sim \Lap{\lambda/\eps}}{\OracleAve(\DS', \OP) + \mu \in S|B}
\end{align}

Putting the last two inequalities together, we can upper bound
$\pr{\Oout(\DS) \in S|  B}$ by

%
\begin{align}
\pr{\Oout(\DS) \in S|  B}
\notag
&= \int_{Z\in B} \pdf(Z)\pr{\Oout(\DS, Z) \in S}dZ \\
\notag
& = \int_{Z\in B} \pdf(Z)\prob{\mu\sim \Lap{\lambda/\eps}^d}{\OracleAve(\DS, Z)+\mu \in S}dZ && (\cref{eq:conditionaloutputperturbation})\\
\notag
& \leq \int_{Z\in B} \pdf(Z)\exp(\eps)\prob{\mu\sim \Lap{\lambda/\eps}^d}{\OracleAve(\DS', Z)+\mu \in S}dZ && (\cref{eq:laplce})\\
\label{eq:condupperbound}
&= \exp{(\epsilon)} \pr{\Oout(\DS') \in S| B}
\end{align}
Let $B^c$ be the complement of event $B$.
To complete the proof we use \cref{eq:condupperbound} and the fact that $\pr{B^c}\leq \delta$.
\begin{align*}
\label{eq:jointprob}
  \pr{\Oout(\DS) \in S}
  &= \pr{\Oout(\DS) \in S \land B} + \pr{\Oout(\DS) \in S \land  B^c} \\
  &\leq \pr{\Oout(\DS) \in S \land  B} + \pr{ B^c} \\
	&\leq \pr{\Oout(\DS) \in S\land  B} + \delta \\
	&= \pr{\Oout(\DS) \in S|  B}\pr{ B} + \delta &&( \cref{eq:conditional}) \\
	&\leq \exp(\eps)\pr{\Oout(\DS') \in S|  B}\pr{ B} + \delta &&( \cref{eq:condupperbound}) \\
	&= \exp(\eps)\pr{\Oout(\DS') \in S \land  B} + \delta  \\
	&\leq \exp(\eps)\pr{\Oout(\DS') \in S} + \delta  \\
\end{align*}

Therefore, $\pr{\Oout(\DS) \in S} \leq \exp{(\epsilon)} \pr{\Oout(\DS') \in S} + \delta$

\end{proof}

\section{Experiments}
\label{sec:experiment}
For our experiments, we consider the problem of privately learning a
the linear threshold function to solve a binary classification task.
Given a labeled data set $\{(x_i,y_i)\}_{i=1}^n$ where each
$x_i \in \mathbb{R}^d$ and $y_i \in \{-1,1\}$, the classification
problem is to find a hyperplane that best separates the positive from
the negative samples. A common approach is to optimize a convex
surrogate loss function that approximates the classification loss. We
use this approach (private logistic regression) as our baseline. In
comparison, using our algorithm $\ObjDisc$, we instead try and
directly optimize $0/1$ classification error over a discrete parameter
space, using an integer program solver. Although this can be
computationally expensive, we find that it is feasible for relatively
small datasets (we use a balanced subset of the Adult dataset with
roughly $n=15,000$ and $d = 23$ features, after one-hot encodings of
categorical features). In this setting, we find that $\ObjDisc$ can
substantially outperform private logistic regression. We remark that
``small data'' is the regime in which applying differential privacy is
most challenging, and we view our approach as a promising way forward
in this important setting.

\begin{figure}[h!]
\label{fig:adultI}
\centering
  \begin{subfigure}[b]{0.46\textwidth}
   	\includegraphics[width=\textwidth]{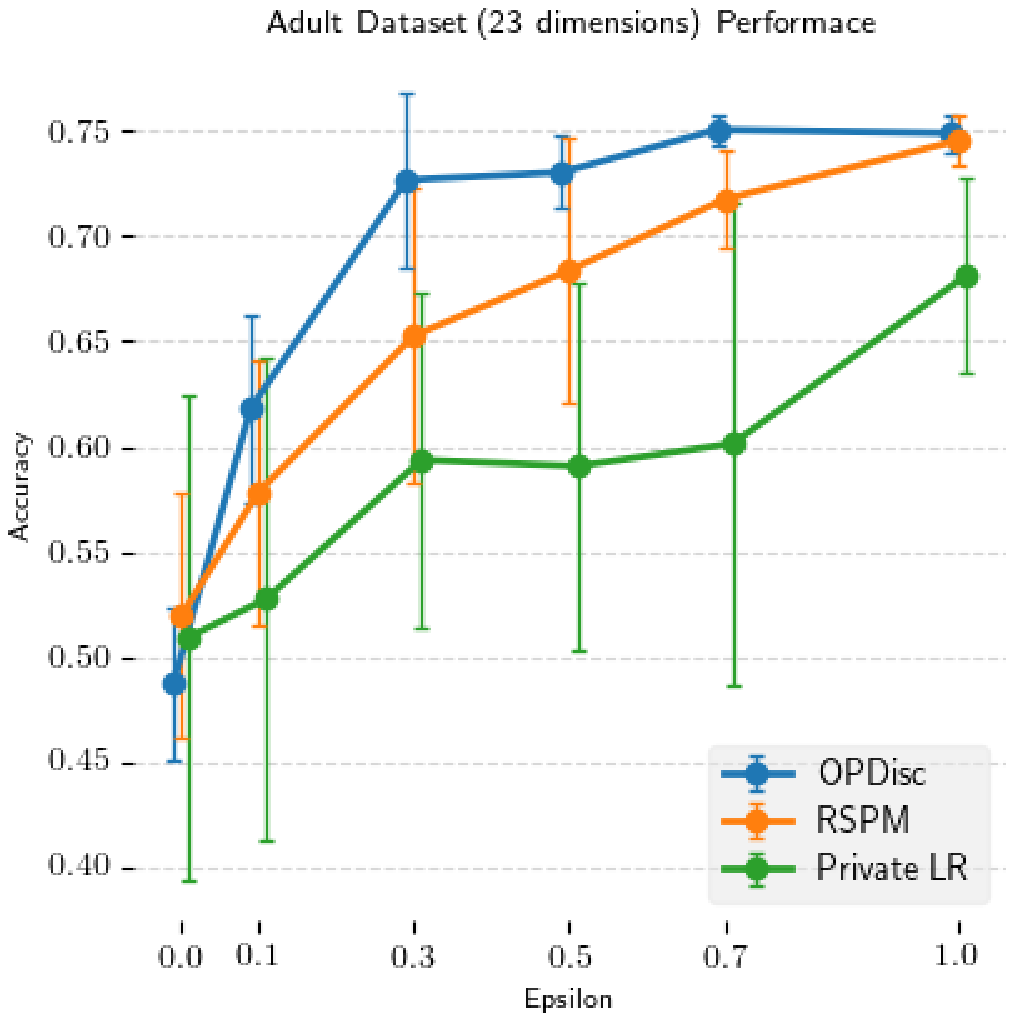}
    \caption{Accuracy versus $\epsilon$.}
    \label{fig:adultAcc}
  \end{subfigure}
  \begin{subfigure}[b]{0.46\textwidth}
    \includegraphics[width=\textwidth]{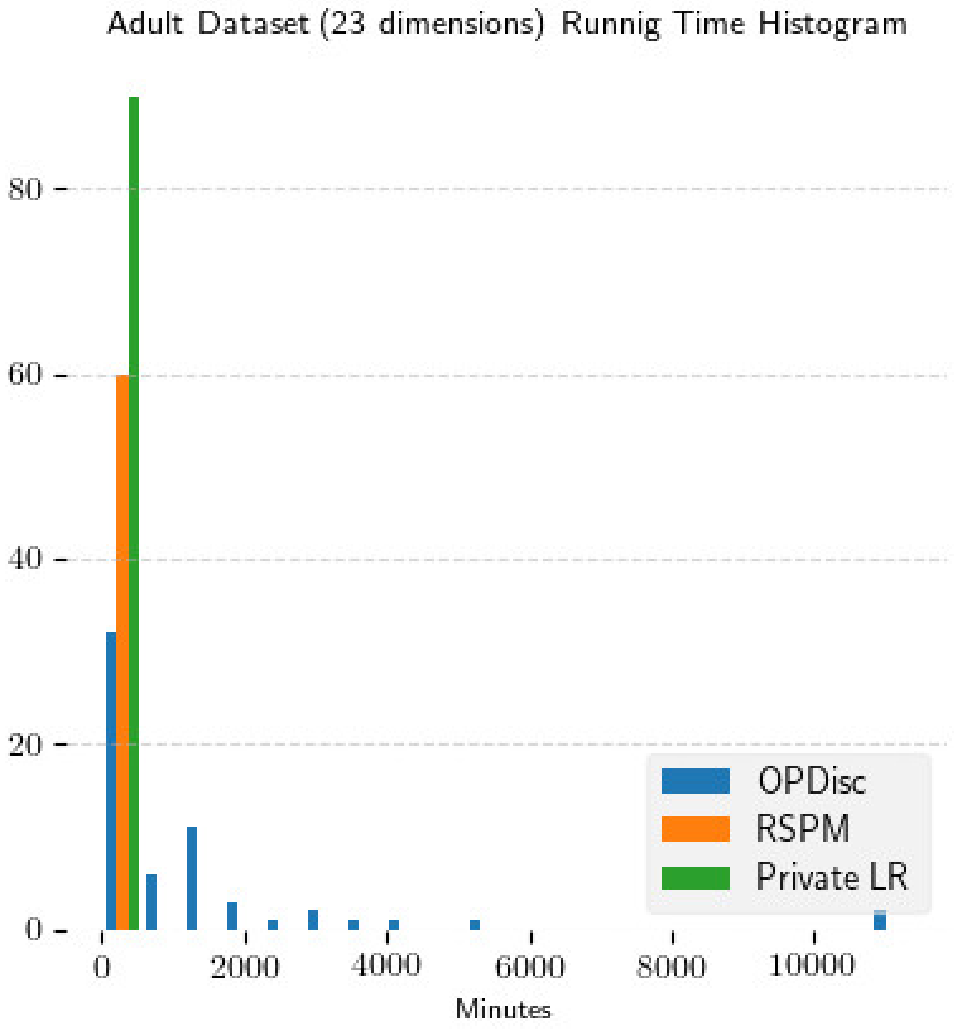}
    \caption{Distribution of run time.}
    \label{fig:adultTime}
  \end{subfigure}
  \caption{Accuracy and runtime evaluation of $\ObjDisc$, RSPM, and
    Private Logistic Regression (LR) on the Adult data set with size
    $n = 15682$ and $d = 23$ features. The value of $\delta = 1/n^2$
    for all methods in all runs.}
\end{figure}

%
\paragraph{Data description and pre-processing}
We use the Adult dataset~\cite{Lichman:2013}, a common benchmark
dataset derived from Census data. The classification task is to
predict whether an individual earns over 50K per year. The dataset has
$n=48842$ records and 14 features that are a mix of both categorical
and continuous attributes.%
The Adult dataset is unbalanced: only 7841 individuals have the
$\geq 50k$ (positive) label. To arrive at a balanced dataset (so that
constant functions achieve 50\% error), we take all positive
individuals, and an equal number of negative individuals selected at
random, for a total dataset size of $n=15682$. We encode categorical
features with one-hot encodings, which increases the dimensionality of
the dataset. We found it difficult to run our algorithm with more than
30 features, and so we take a subset of 7 features from the Adult
dataset that are represented by $d=23$ real valued features after
one-hot encoding. We chose the subset of features to optimize the
accuracy of our logistic regression baseline.

\paragraph{Baseline: private logistic regression (LR).}
We use as our baseline private logistic regression which optimizes
over the space of continuous halfspaces with the goal of minimizing
the logistic loss function, given by
$l_i(w) = \log\left(1 + \exp(-y \langle w, x_i\rangle)\right)$.  We
implement a differentially private stochastic gradient descent
(privateSGD) algorithm from \cite{BST14, abadi2016deep}, keeping track
of privacy loss using the moment accountant method as implemented in the TensorFlow
Privacy Library. The algorithm involves three parameters: gradient
clip norm, mini-batch size, and learning rate. For each target privacy
parameters $(\eps, \delta)$, we run a grid search to identify the
triplet of parameters that give the highest accuracy. To lower the
variance of the accuracy, we also take average over all the iterates
in the run of privateSGD.

\paragraph{Implementation details for $\ObjDisc$ and RSPM}
For both $\ObjDisc$ and RSPM, we encode each record
$(x_i, y_i) \in \DS$ as a $0/1$ loss function:
$l_i(w) = \mathbbm{1}[y_i \neq \text{sgn}( \inner{x_i}{w})]$. For both
algorithms, we have separation parameter $\tau = 1$ and constrains the
weight vectors to have $\ell_2$ norm bounded by $\sqrt{d}$. In
$\ObjDisc$, each coordinate $w_j$ can take values in the discrete set
$\{-B, -B+1, \ldots, B-1, B\}$ with $B = \lfloor \sqrt d\rfloor$, and
we constrain the $\|w\|_2$ to be at most $\sqrt{d}$. In RSPM, we
optimize over the set $\{-1, 0, 1\}^d$.  $\ObjDisc$ requires an
approximate projected linear optimization oracle
(Definition~\ref{approxlinpi}) and RSPM requires a linear
optimization oracle (Definition~\ref{approxlin}).  In the appendix, we
show that the optimization problems can be cast as mixed-integer
programs (MIPs), allowing us to implement the oracles via the Gurobi
MIP solver. The Gurobi solver was able to solve each of the integer
programs we passed it. The source code for \ObjDisc~ is available via GitHub
(\url{https://github.com/giusevtr/private_objective_perturbation}).

\paragraph{Empirical evaluation.}
We evaluate our algorithms by their ($0/1$) classification
accuracy. The \cref{fig:adultAcc} plots the
accuracy of \ObjDisc~and our baseline (y-axis) as a function of the
privacy parameter $\epsilon$ (x-axis), averaged over 15 runs. We fix
$\delta = 1/n^2$ for all three algorithms across all runs. The error
bars report the empirical standard deviation. We see that both
\ObjDisc~and RSPM improve dramatically over the logistic regression
baseline. This shows that in small-data settings, it is possible to
improve over the error/privacy tradeoff given by standard
convex-surrogate approaches by appealing to non-convex optimization
heuristics. \ObjDisc~also obtains consistently better error than
RSPM. The algorithm \ObjDisc~also has a significantly lower variance
in its error compared to the other two algorithms. The
 \cref{fig:adultTime} gives a histogram of the run-time of our
three methods for our experiment. For both \ObjDisc~and RSPM, the running time is dominated by an integer-program
solver. We see that while our method frequently completes quite
quickly (often even beating our logistic regression baseline!), it has
high variance, and occasionally requires a long time to run. However, we were always able to solve the necessary optimization problem, eventually.

\bibliographystyle{alpha}
\bibliography{arxiv}
\newpage

\appendix

\section{Definitions}
\begin{definition}[\cite{goldman1993exact,oracle16}]\label{def:separator}
A set $U \subseteq \lossSet$ is a \emph{separator set} for a parameter space $\cW$ if for every pair of distinct parameters $w, w' \in \cW$, there is an $l \in U$ such that:
$$l(w) \neq l(w')$$
If $|U| = m$, then we say that $\cW$ has a separator set of size $m$.
\end{definition}

\begin{algorithm}[H]
\label{algorithm:rspm}
\SetAlgoLined
\textbf{Given: } A separator set $U = \{ e_1, \ldots, e_m \}$ for class $\discspace$ and optimization oracle for $\discspace^*$\;
\KwIn{$\DS = \{ l_i \}_{i\in[n]}$}
$n \leftarrow |\DS|$ \;
$\sigma \leftarrow  \frac{7  \sqrt{m \ln{1/\delta}}}{\epsilon}$ \;
Draw i.i.d random vector $\eta \sim \mathcal{N} \big(0, \sigma^2 \big)^{d+1}$\;
Construct a weighted dataset $WD$ of size $n + m$ as follows:
$$WD(\DS, \eta) = \{ (l_i, 1) : l_i \in \DS\} \cup \{(e_i, \eta_i) :e_i \in U \} 	$$
$$w \in \argmin_{w^* \in \discspace} \sum_{(l_i, p_i) \in WD} p_i l_i(w)$$

\KwOut{$w$}
\caption{Gaussian Report Separator perturbed Minimum \cite{neel2018use}}
\end{algorithm}

\begin{definition}
A weighted optimization oracle for a class $\cW$ is a function $\oracle: (\lossSet \times \mathbb{R})^* \rightarrow \cW$ that takes as input a weighted dataset $WD \in (\mathcal{L}\times \mathbb{R})^*$ and outputs $w \in \cW$ such that
$$w \in \argmin_{w^* \in \cW} \sum_{(l_i, p_i) \in WD} p_i l_i(w)$$
\end{definition}

\section{Missing Proofs in Section \ref{sec:discrete}}

\textbf{Proof of Lemma~\ref{lemma:uniquew}.}
\begin{proof}
Since $\mathcal{W}_\tau$ is a discrete space, by a union bound it suffices to show that for any pair $w \neq w' \in \mathcal{W}_\tau$, $\pr{L(\DS, w)-\langle \eta, \pi(w) \rangle = L(\DS, w')-\langle \eta, \pi(w') \rangle} = 0$. Since $w \neq w'$, they must differ in at least one coordinate $i$. Condition on the realization of all of the coordinates of $\eta$ but the $i^{th}$, $\eta_{-i}.$ Then $L(\DS, w)-\langle \eta, \pi(w) \rangle = L(\DS, w')-\langle \eta, \pi(w') \rangle$, only if
$$\eta_i = \frac{L(\DS, w')-L(\DS, w) + \sum_{j \neq i}\eta_j (w_j-w_j)}{(w_i'-w_i)}$$
The expression on the righthand side is well-defined since $w_i \neq w_i'$. But then $\eta_i \sim \mathcal{N}(0, \sigma^2)$ even after conditioning on $\eta_{-i}$, and so its probability of taking any fixed value is $0$. This proves the claim.
\end{proof}

\textbf{Proof of Lemma~\ref{lem:gauss_ratio}.}
\begin{proof}
Fix any $r\in \mathbb{R}^{d+1}$, $w\in \discspace$,  and let $v = \noisemap{w}(r) - r = \frac{2}{\gamma}GD^2\pi(\hat{w})$. Note that $\|v\|_2 = \frac{2}{\tau} G D^2$. Fix an orthonormal basis of $\mathbb{R}^{d+1}$, where the first basis vector $b_1$ is parallel to $v$. Let $r^{[1]}$ be the projection of $r$ onto the direction of $b_1$. Then by Lemma $17$ in \cite{neel2018use}:
\begin{equation}\label{equation:pdfG}
\pdf(r) \leq \exp{\bigg( \frac{1}{2\sigma^2}\Big( \|v\|_2^2 + 2\|v\|_2 \|r^{[1]}\|_2 \Big)   \bigg)}\pdf(\noisemap{w}(r))
\end{equation}
The ratio $\pdf(r)/\pdf(\noisemap{w}(r))$ is bounded by $\exp{(\epsilon)}$  in the event that $\|r^{[1]}\|_2 < 2\sigma^2 \epsilon/\|v\|_2 - \|v\|_2/2$. $||r^{[1]}||_2 \sim |\lambda|$, where $\lambda \sim \mathcal{N}(0, \sigma^2)$, and so using a tail bound for the $\chi^2$ random variable, $\|r^{[1]}\|_2 < 2\sigma^2 \epsilon/\|v\|_2 - \|v\|_2/2$ with probability $1-\delta$ so long as $\sigma = \frac{c \|v\|_2 \sqrt{\ln{(1/\delta)}}}{2\epsilon}$ for $c\geq 3.5$.
Since we have that $\|v\|_2=\frac{2}{\tau} G D^2$, for us it suffices to set $\sigma = \frac{7 G D^2 \sqrt{\ln{(1/\delta)}}}{2\epsilon \tau}$. Let $\Lambda = \sigma^2 \epsilon/\|v\|_2 - \|v\|_2/2$ and define the set $C = \{\eta : \|\eta^{[1]}\|_2 > \Lambda\}$. Then since $\pr{\eta \in C} < \delta$, we are done.
\end{proof}

\textbf{Proof of Theorem~\ref{dproof}.}
\begin{proof}  Write $\wopt = \oracle_\pi(\DS, \eta)$. We first want to show that there exists a mapping $\noisemap{\wopt}:\mathbb{R}^{d+1}\rightarrow\mathbb{R}^{d+1}$ such that $\wopt$ is the parameter vector output on any neighboring dataset $\DS'$ when the noise vector is realized as $\noisemap{\wopt}(\eta)$: that is, $\wopt = \oracle_\pi(\DS',\noisemap{\wopt}(\eta))$. Let $S\subset \discspace$ be a subset of discrete parameters. If we can show that $\pr{S} \approx \pr{\noisemap{\wopt}(S)}$, then the probability of outputting any particular $w$ on input $\DS$ should be close to the corresponding probability, on input $\DS'$ as desired. Denote  the set of of noise vectors that induce output $w$ on dataset $\DS$ by $\noiseset{\DS, w} =\{\eta : \oracle_\pi(\DS, \eta)=w  \}$.
Define our mapping:
$$\noisemap{\wopt}(\eta) = \eta + \frac{2}{\tau } G D^2 \bproj{\wopt}  $$
We now use the 3 key Lemmas to finish the privacy proof.
Putting it all together:
%
%
%
\begin{align*}
\pr{\oracle_\pi(\DS,\eta) \in S} &= \pr{\eta \in \bigcup_{\wopt} \noiseset{\DS, \wopt}}\\
&=\int_{\mathbb{R}^{d+1}} \pdf(\eta) \mathbbm{1}\{ \eta \in \bigcup_{\wopt} \noiseset{\DS, \wopt}\}d\eta\\
%
&=\int_{(\mathbb{R}^{d+1}\setminus B)\setminus C} \pdf(\eta) \mathbbm{1}\{ \eta\in \bigcup_{\wopt} \noiseset{\DS, \wopt}\}d\eta + \int_{C} \pdf(\eta) \mathbbm{1}\{ \eta\in\bigcup_{\wopt} \noiseset{\DS, \wopt}\}d\eta\\
&\leq\int_{(\mathbb{R}^{d+1}\setminus C)\setminus B} \pdf(\eta) \mathbbm{1}\{ \eta \in\bigcup_{\wopt} \noiseset{\DS, \wopt}\}d\eta + \delta \quad (\text{Lemma~\ref{lemma:uniquew}, $\pr{\eta \in C} < \delta$}) \\
&=\sum_{\wopt \in S} \int_{\mathbb{R}^{d+1}\setminus (C \cup B)} \pdf(\eta) \mathbbm{1}\{ \eta \in \noiseset{\DS, \wopt}\}d\eta + \delta  \\
&\leq \sum_{\wopt \in S}  \int_{\mathbb{R}^{d+1}\setminus (C \cup B)} \pdf(\eta) \mathbbm{1}\{ \noisemap{\wopt}(\eta) \in \noiseset{\DS', \wopt}\}d\eta + \delta \quad (\text{Lemma~\ref{lemma:indicator}}) \\
&\leq \sum_{\wopt \in S} \int_{\mathbb{R}^{d+1}\setminus (C \cup B)} \exp(\epsilon)\pdf(\noisemap{\wopt}(\eta)) \mathbbm{1}\{ \noisemap{\wopt}(\eta) \in \noiseset{\DS', \wopt}\}d\eta + \delta \quad (\text{bounded ratio})\\
&= \sum_{\wopt \in S}  \int_{\mathbb{R}^{d+1}\setminus (\noisemap{\wopt}(C) \cup \noisemap{\wopt}(B))}  \exp(\epsilon)\pdf(\eta)\mathbbm{1}\{\eta \in \noiseset{\DS', \wopt}\} \bigg|\frac{\partial \noisemap{\wopt}}{\partial \eta} \bigg| d\eta  \quad (\eta \rightarrow \noisemap{\wopt}(\eta)) \\
&\leq \exp(\epsilon) \sum_{\wopt \in S}  \int_{\mathbb{R}^{d+1}}  \pdf(\eta)\mathbbm{1}\{\eta \in \noiseset{\DS', \wopt}\} d\eta + \delta \quad  \\
&= \exp(\epsilon) \pr{\eta \in \bigcup_{\wopt} \noiseset{\DS', \wopt}} \\
&= \exp(\epsilon) \pr{\oracle_\pi(\DS',\eta) \in S} + \delta
\end{align*}
This completes the proof.
\end{proof} 
%
%
%
\section{Missing Proofs in Section~\ref{sec:sampling}}
\textbf{Proof of Lemma~\ref{lemma:close_to_mean_lemma}}
\begin{proof}
If we denote $w(\sigma) = \approxoracle(\DS, \sigma)$ as the output of an approximate oracle on dataset $\DS$ induced by a realization of the noise vector $\sigma$,  then $w(\sigma^1), \ldots w(\sigma^m)$ are $m$ independent random variables with $-D\leq w(\sigma^i)_j\leq D$ for all $i$ and for each coordinate $j\leq d$.

For any index coordinate $j$, let $X_i = (w(\sigma^i)_j + D )/ 2D$, $S=\frac{1}{m}\sum_i^m X_i$ and $\mu_S = \mathbb{E}[S]$. Since $0\leq X_i \leq 1$, by Chernoff bound we have
\begin{equation*}\begin{split}
&\pr{S> \mu_S + \gamma} < e^{-2m\gamma^2}\\
&\pr{\frac{1}{m}\sum_i^m(w(\sigma^i)_j + D )/ 2D> \mu_S + \gamma} < e^{-2m\gamma^2}\\
&\pr{\frac{1}{m}\sum_i^m w(\sigma^i)_j>  2D\mu_S -D +   2D\gamma} < e^{-2m\gamma^2}\\
&\pr{\cW(\DS, \Sigma)_j >  \mathbb{E}_\sigma[\approxoracle(\DS, \sigma)]_j +   2D\gamma} < e^{-2m\gamma^2}
\end{split}\end{equation*}
Plugging in the value of  $m=\frac{-\ln{(\delta/(2d))}}{2\gamma^2}$ we get:
\begin{equation*}\begin{split}
&\pr{\cW(\DS, \Sigma)_j-\mathbb{E}_\sigma[\approxoracle(\DS, \sigma)]_j>  2D\gamma} < \delta/(2d) \\
&\pr{\cW(\DS, \Sigma)_j-\mathbb{E}_\sigma[\approxoracle(\DS, \sigma)]_j<  -2D\gamma} < {\delta}/(2d)
\end{split}\end{equation*}
Thus, by union bound
\[
\pr{\|\cW(\DS, \Sigma) - \mathbb{E}[\approxoracle(\DS, \sigma)]\|_1 > 2D\gamma} \leq \sum_{j=1}^d \pr{\big|\cW(\DS,\Sigma)_j - \mathbb{E}[\approxoracle(\DS, \sigma)_j] \big| > 2D\gamma} < \sum_{j=1}^d\delta/(2d) = \delta/2
\]
\end{proof}
\section{Experiments Details}

\subsection{Implentation Details}
The implementation is written in Python and uses Gurobi as a solver.
We run the experiments on a server machine with an 8-core AMD
processor and 192 GB of RAM.

\subsection{Mixed Integer Programs for $\ObjDisc$ and RSPM}
\label{sec:mip}
We use a mixed integer programs (MIP) to encode the optimization
problems of $\ObjDisc$ and RSPM over the space of $d$-dimentional
discrete halfspaces. The input to our algorithm is a dataset
$\{(x_i, y_i)\}^n$ where $x_i \in \mathbb{R}^d$, $y\in \{-1,1\}$ and a
noise vector $\eta \in \mathbb{R}^{d+1}$. The discretization parameter
is $\tau$ and $D$ is the $\ell_2$-norm bound of $W$.

\begin{figure}[h]\label{algorithm:MIP}
\begin{equation}
\begin{aligned}
& \underset{w\in W}{\text{min}}
& & \sum_{i=1}^n e_i - \sum_{i=1}^d \eta_i w_i / D -  \eta_{d+1}\lambda / D  \\
& \text{s.t.} & &   y_i \sum_{j=1}^d w_j x_j + c e_i> 0 \quad \forall i \in [n]\\
& & & \lambda^2 + \|w\|_2^2 \leq D^2 \\
& & & e_i \in \{0,1\} \quad \forall i \in [n] \\
& & & w_j \in \tau \mathbb{Z} \quad \forall j \in [d]
\end{aligned}
\end{equation}
\caption{MIP oracle used by $\ObjDisc$. The MIP consist of $n$
  integral constraints, $d$ linear and $1$ quadratic constraint.}
\end{figure}

In $\ObjDisc$, the objective we want to minimize is
$\loss(\DS,w) -\inner{\eta}{\bproj{w}}$ which we can rewrite as
\begin{equation}
\label{eq:MIPobjective}
\loss(\DS,w) - \sum_{i=1}^d \eta_i w_i / D - \eta_{d+1}\sqrt{D^2-\|w\|_2^2}/D
\end{equation}
The loss term $\loss(\DS,w)$ in the objective is encoded as a sum of
$n$ binary variables $e_i \in \{0,1\}$, such that if $e_i = 0$ only
then the constraint $y_i \inner{w}{x_i}>0$ must be satisfied. Thus,
the sum $\sum_{i=0}^n e_i$ is equal to the number of misclassified
samples. For each $i \in [n]$, we enconde the constraint corresponding
to $e_i$ in our MIP by the inequality $y_i \inner{w}{x_i}+ c e_i> 0$
where $c$ is a large enough constant with
$c> \max_{x,w} \|x\|_2 \|w\|_2$. The third term in the objective
function \ref{eq:MIPobjective} is non-linear but we can express it as
linear term in the objective by introducing the slack variable
$\lambda$. Then, in order to force the condition that
$\lambda=\sqrt{D^2-\|w\|_2^2}$ we add the quadratic constraint
$\lambda^2 + \|w\|_2^2 \leq D^2$.

\begin{figure}[h]\label{algorithm:MIPRSPM}
\begin{equation}
\begin{aligned}
& \underset{w\in W}{\text{min}}
& & \sum_{i=1}^n e_i -\sum_{i=1}^d \eta_i w_i  \\
& \text{s.t.} & &   y_i \sum_{j=1}^d w_j x_j + c e_i> 0 \quad \forall i \in [n]\\
& & & e_i \in \{0,1\} \quad \forall i \in [n] \\
& & &  -1 \leq w_j  \leq 1 \quad \forall j \in [d] \\
& & & w_j \in \tau \mathbb{Z} \quad \forall j \in [d]
\end{aligned}
\end{equation}
\caption{MIP oracle used by RSPM. The MIP consist of $n$ integral constraints, and $d$ linear constraint.}
\end{figure}
In RSPM, we are simply optimizing the 0-1 loss over the augmented data
set, including the input data set as well as the weighted examples
from the separator set.


\end{document}